  \newcommand\figcaption{\def\@captype{figure}\caption}
  \newcommand\tabcaption{\def\@captype{table}\caption}
\newtheorem{theorem}{Theorem}
\def\eg{{\it{e.g.}}}
\definecolor{mycolor}{RGB}{241,240,255}
\definecolor{rawcolor}{HTML}{412F8A}
\definecolor{lcmcolor}{HTML}{fc8e62}
\newcommand{\rawcolor}[1]{\textcolor{rawcolor}{#1}}
\newcommand{\lcmcolor}[1]{\textcolor{lcmcolor}{#1}}
\newcommand{\bv}{\lcmcolor{$\bullet$\,}}  
\newcommand{\bh}{\rawcolor{$\mathbf{\circ}$\,}} 
\title{LCM: Locally Constrained Compact Point Cloud Model for Masked Point Modeling}
\author{%
    Yaohua Zha$^{1,2}$
  ~~ Naiqi Li$^1$
  ~~ Yanzi Wang$^1$ 
  ~~ Tao Dai$^3$ \thanks{Corresponding author. ${\text{\Letter}}$ daitao.edu@gmail.com} \\
  ~~ \textbf{Hang Guo}$^1$
  ~~ \textbf{Bin Chen}$^4$
  ~~ \textbf{Zhi Wang}$^1$ 
  ~~ \textbf{Zhihao Ouyang}$^5$ 
  ~~ \textbf{Shu-Tao Xia}$^{1,2}$ \vspace{0.3cm} \\
  \normalsize $^1$Tsinghua Shenzhen International Graduate School, Tsinghua University \\
  ~~ $^2$Institute of Visual Intelligence, Pengcheng Laboratory\\
  ~~ $^3$College of Computer Science and Software Engineering, Shenzhen University\\
  ~~ $^4$Harbin Institute of Technology, Shenzhen
  ~~ $^5$Bytedance Inc.\vspace{0.1cm}\\
  \tt\small zyh1614399882@gmail.com
}
\begin{document}

\maketitle

\begin{abstract}
 The pre-trained point cloud model based on Masked Point Modeling (MPM) has exhibited substantial improvements across various tasks. However, these models heavily rely on the Transformer, leading to quadratic complexity and limited decoder, hindering their practice application. To address this limitation, we first conduct a comprehensive analysis of existing Transformer-based MPM, emphasizing the idea that redundancy reduction is crucial for point cloud analysis. To this end, we propose a \textbf{L}ocally constrained \textbf{C}ompact point cloud \textbf{M}odel (LCM) consisting of a locally constrained compact encoder and a locally constrained Mamba-based decoder. Our encoder replaces self-attention with our local aggregation layers to achieve an elegant balance between performance and efficiency. Considering the varying information density between masked and unmasked patches in the decoder inputs of MPM, we introduce a locally constrained Mamba-based decoder. This decoder ensures linear complexity while maximizing the perception of point cloud geometry information from unmasked patches with higher information density. Extensive experimental results show that our compact model significantly surpasses existing Transformer-based models in both performance and efficiency, especially our LCM-based Point-MAE model, compared to the Transformer-based model, achieved an improvement of 1.84\%, 0.67\%, and 0.60\% in average accuracy on the three variants of ScanObjectNN while reducing parameters by \textbf{88\%} and computation by \textbf{73\%}. 
 Code is available at \url{https://github.com/zyh16143998882/LCM}.
\end{abstract}

\section{Introduction}

3D point cloud perception, as a crucial application of deep learning, has achieved significant success across various areas such as autonomous driving, robotics, and virtual reality. 
Recently, point cloud self-supervised learning~\cite{pointcontrast,crosspoint,pointbert}, capable of learning universal representations from extensive unlabeled point cloud data, has gained much attention. Among which, masked point modeling (MPM)~\cite{pointbert,pointmae,m2ae,act,i2pmae,femae}, as an important self-supervised paradigm, has become mainstream in point cloud analysis and has gained immense success across diverse point cloud tasks.

The classical MPM~\cite{pointbert,pointmae,m2ae}, inspired by masked image modeling~\cite{bao2021beit,mae,simmim} (MIM), divides point clouds into patches and uses a standard Transformer~\cite{attention} backbone. It randomly masks some patches in the encoder input and combines the unmasked patch tokens with randomly initialized masked patch tokens in the decoder input. 
It predicts the geometric coordinates or semantic features of the masked patches from the decoder output tokens, enabling the model to learn universal 3D representations. Despite the significant success, two inherent issues of Transformers still limit their practical deployment.

The first issue is that the Transformer architecture leads to quadratic complexity and huge model sizes.
As shown in Figure \ref{comp1} (a) and (b), MPM methods like Point-MAE~\cite{pointmae} based on standard  Transformer~\cite{attention} require 22.1M parameters and complexity exponentially grows with an increase in the length of input patches. However, in practical point cloud applications, models are often deployed on embedded devices such as robots or VR headsets, where strict constraints exist regarding the model's size and complexity.  In this context, lightweight networks such as PointNet++~\cite{pointnet++} are more popular in practical applications due to their lower parameter requirement (only 1.5M) even though they may have inferior performance.

\begin{wrapfigure}[14]{r}{0.5\textwidth}
    \centering
    \vspace{-0.5cm}
    \includegraphics[width=0.5\textwidth]{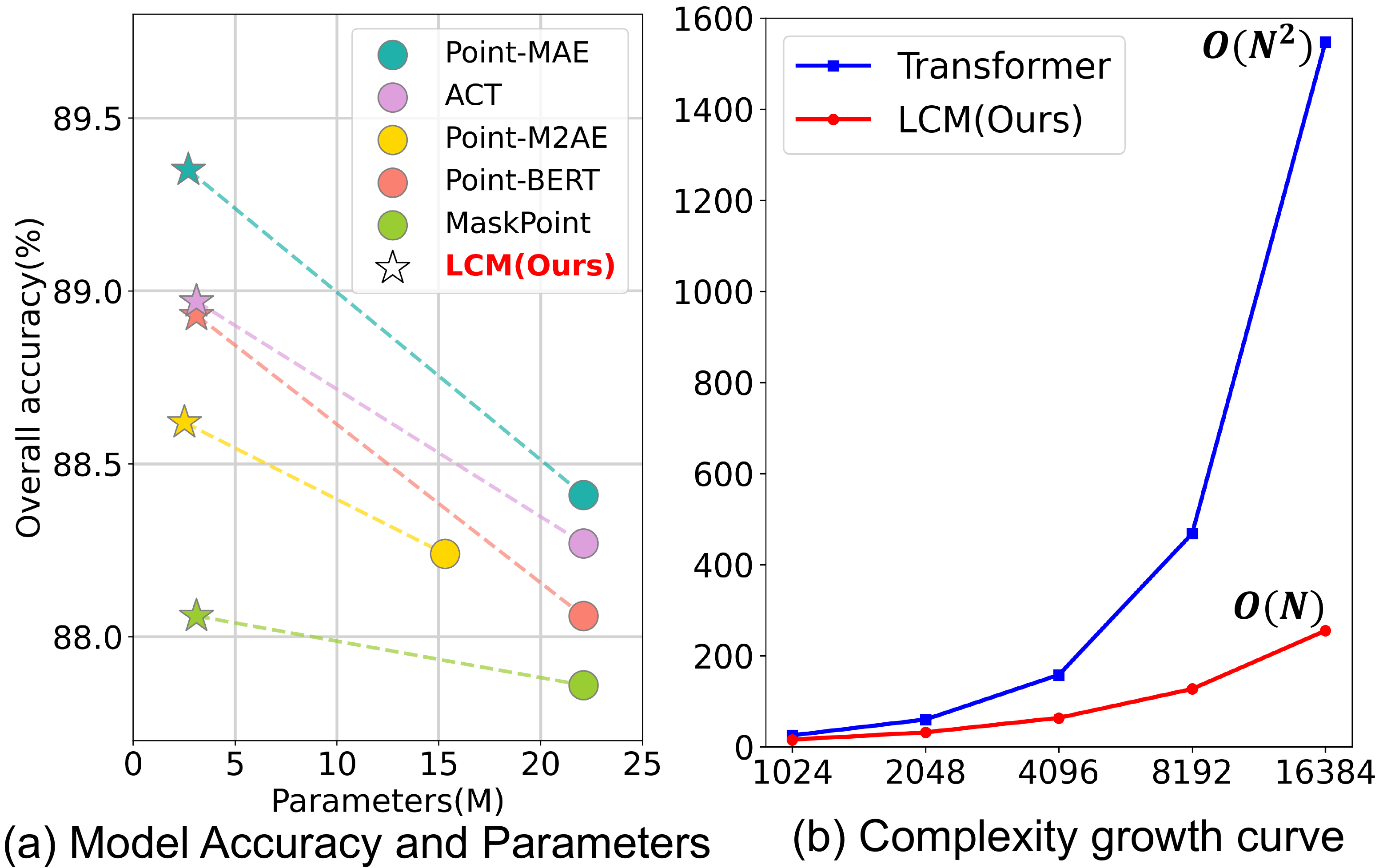}
    \vspace{-15pt}
    \caption{Comparison of our LCM and Transformer in terms of performance and efficiency.}
    \vspace{-15pt}
    \label{comp1}
\end{wrapfigure}

Another issue is that when Transformers~\cite{attention} are used as decoders in Masked Point Modeling (MPM), their potential to reconstruct masked patches with lower information density is limited. In the decoder input of MPM, randomly initialized masked tokens with lower information density are typically concatenated with unmasked tokens with higher information density and fed into the Transformer-based decoder. The self-attention layers then learn to process these tokens of varying information density based on loss constraints. However, relying solely on the loss to learn this objective is challenging due to the lack of explicit importance guidance for different densities.
Additionally, in Section ~\ref{subsec:info}, we further explain from an information theory perspective that the self-attention mechanism, as a higher-order processing function, can limit the model's reconstruction potential.

To address the above issues, as shown in Figure \ref{comp}, we first conducted a comprehensive analysis of the effects of different top-K attention on the performance of the Transformer model, emphasizing the idea that redundancy reduction is crucial for point cloud analysis. To this end, we propose a \textbf{L}ocally constrained \textbf{C}ompact point cloud \textbf{M}odel (LCM), consisting of a locally constrained compact encoder and a locally constrained Mamba-based decoder, to replace the standard Transformer. Specifically, based on the idea of redundancy reduction, our compact encoder replaces self-attention with our local aggregation layers to achieve an elegant balance between performance and efficiency. The local aggregation layer leverages static local geometric constraints to aggregate the most relevant information for each patch token. Since static local geometric constraints only need to be computed once at the beginning and are shared across all layers, it avoids dynamic attention computations in each layer, significantly reducing complexity. Furthermore, it uses only two MLPs for information mapping, greatly reducing the network's parameters. 

In our decoder design, considering the varying information density between masked and unmasked patches in the inputs of MPM, our decoder introduces the State Space Model (SSM) from Mamba~\cite{mamba,s4,vmamba,visionmamba,mambair} to replace self-attention, 
ensuring linear complexity while maximizing the perception of point cloud geometry information from unmasked patches with higher information density. However, as discussed in Section ~\ref{subsec:mamba}, the directly replaced SSM layer exhibits a strong dependence on the order of input patches. Inspired by our compact encoder, we migrate the idea of local constraints to the feedforward neural network of our Mamba-based decoder, proposing the Local Constraints Feedforward Network (LCFFN). This eliminates the need to explicitly consider the sequence order of input in SSM layers because the subsequent LCFFN can adaptively exchange information among geometrically adjacent patches based on their implicit geometric order.

Our LCM is a universal point cloud architecture designed based on the characteristics of the point cloud to replace the standard Transformer. It can be trained from scratch or integrated into any existing pretraining strategy to achieve an elegant balance between performance and efficiency. For example, the LCM model pre-trained based on the Point-MAE strategy requires only 2.7M parameters, which is about \textbf{10 $\times$} efficient compared to the original Transformer with 22.1M. Furthermore, in terms of performance, compared to the Transformer, the LCM shows significant improvements of 1.84\%, 0.67\%, and 0.60\% in average classification accuracy of three variants of ScanObjectNN~\cite{scanobjectnn}. 
Additionally, in the detection task of ScanNetV2~\cite{scannet}, there are also significant improvements of \textbf{+5.2\%} on $AP_{25}$ and \textbf{+6.0\%} on $AP_{50}$.

We summarize the contributions of our paper as follows: \textbf{1)} We propose a locally constrained compact encoder, which leverages static local geometric constraints to aggregate the most relevant information for each patch token, achieving an elegant balance between performance and efficiency. \textbf{2)} We propose a locally constrained Mamba-based decoder for masked point modeling, which replaces the self-attention layer with Mamba's SSM layer and introduces a locally constrained feedforward neural network to eliminate the explicit dependency of Mamba on the input sequence order. \textbf{3)} Our locally constrained compact encoder and locally constrained Mamba-based decoder together constitute the efficient backbone LCM for masked point modeling. We combine LCM with various pretraining strategies to pre-train efficient models and validate our model's superiority in efficiency and performance across various downstream tasks.

\section{Related Work}

\textbf{Point Cloud Self-supervised Pre-training.}
Point cloud self-supervised pre-training~\cite{pointbert,pointcontrast,wu2023masked,wu2024towards,wang2024groupcontrast} has achieved remarkable improvement in many point cloud tasks. This approach first applies a pretext task to learn the latent 3D representation and then transfers it to various downstream tasks. PointContrast~\cite{pointcontrast} and CrossPoint~\cite{crosspoint} initially explored utilizing contrastive learning~\cite{cpc,cmc} for learning 3D representations, which achieved some success; however, there were still some shortcomings in capturing fine-grained semantic representations. Recently, masked point modeling methods~\cite{pointbert,pointmae,femae,pointhdmae} demonstrated significant improvements in learning fine-grained point cloud representations through masking and reconstruction. Many methods~\cite{act,jointmae,pimae,i2pmae,recon} have attempted to leverage multimodal knowledge to assist MPM in learning more generalized representations, yielding significant improvements. After obtaining a pre-trained point cloud model, many works~\cite{idpt,dapt,zhang2024parameter,guo2023adaptir} remain to explore parameter-efficient fine-tuning methods to better adapt these pretrained models to a variety of downstream tasks. While the pre-trained models mentioned above have achieved tremendous success, they all rely on the Transformer architecture. In this paper, we focus on designing a more efficient architecture to replace the Transformer in these methods, significantly reducing computational and resource requirements.

\section{Methodology}

\subsection{Observation of Top-K Attention}
\label{subsec:obse}
\begin{figure}[t]
    \begin{center}
    \includegraphics[width=\linewidth]{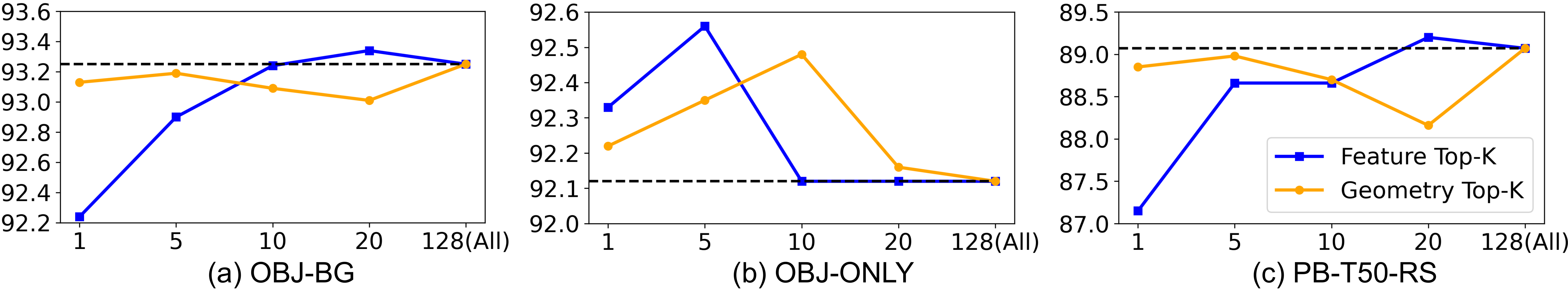}
    \vspace{-15pt}
    \caption{The effect of using top-K attention in feature space and geometric space by the Transformer on the classification performance in ScanObjectNN, all results are the averages of ten repeated experiments.
    }\label{comp}
    \vspace{-15pt}
    \end{center}
\end{figure}

\begin{wrapfigure}[11]{r}{0.29\textwidth}
    \centering
    \vspace{-0.5cm}
    \includegraphics[width=0.29\textwidth]{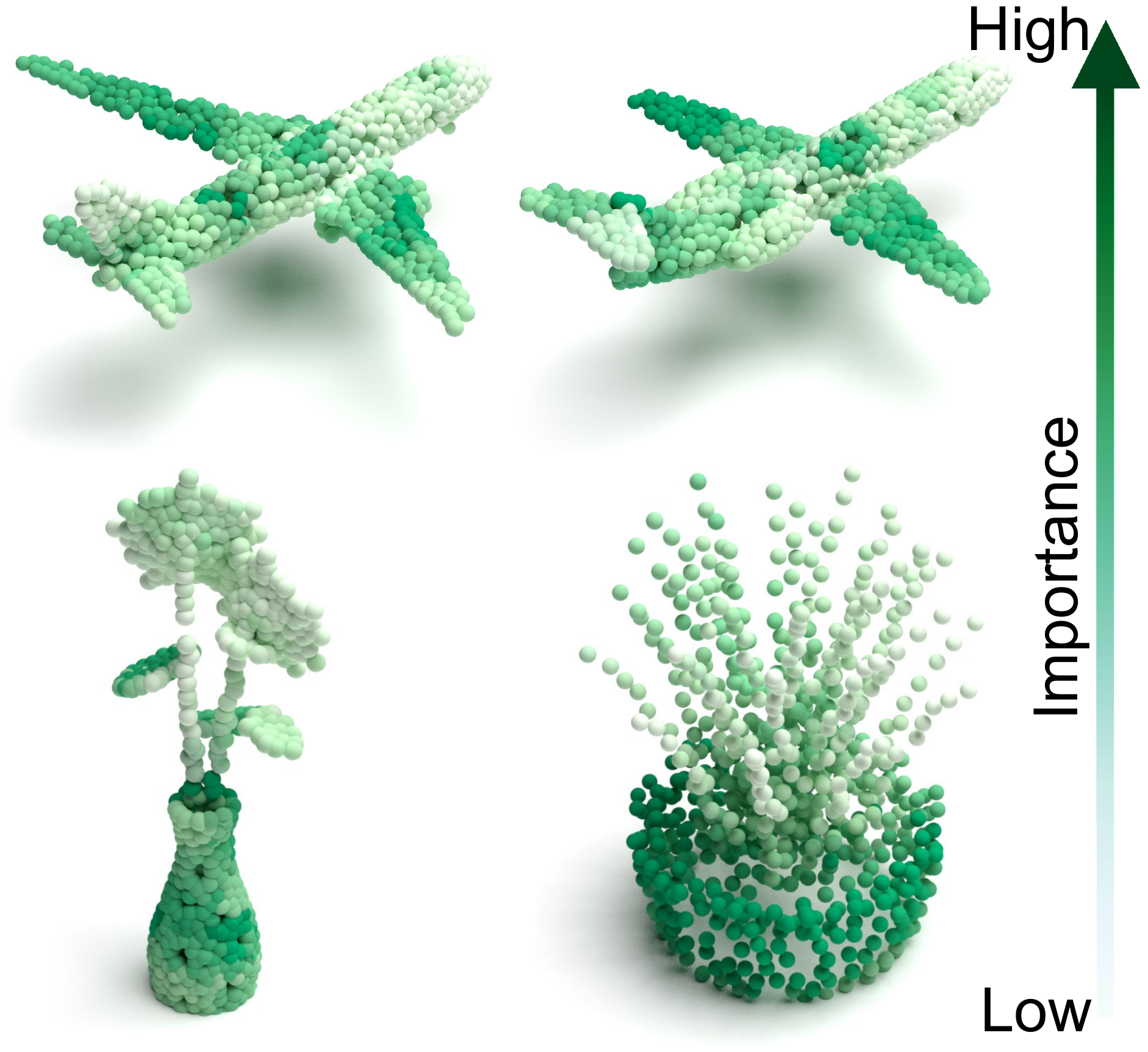}
    \vspace{-15pt}
    \caption{Point heatmap.}
    \vspace{-15pt}
    \label{hitmap}
\end{wrapfigure}

\begin{figure*}[t]
    \begin{center}
    \includegraphics[width=\linewidth]{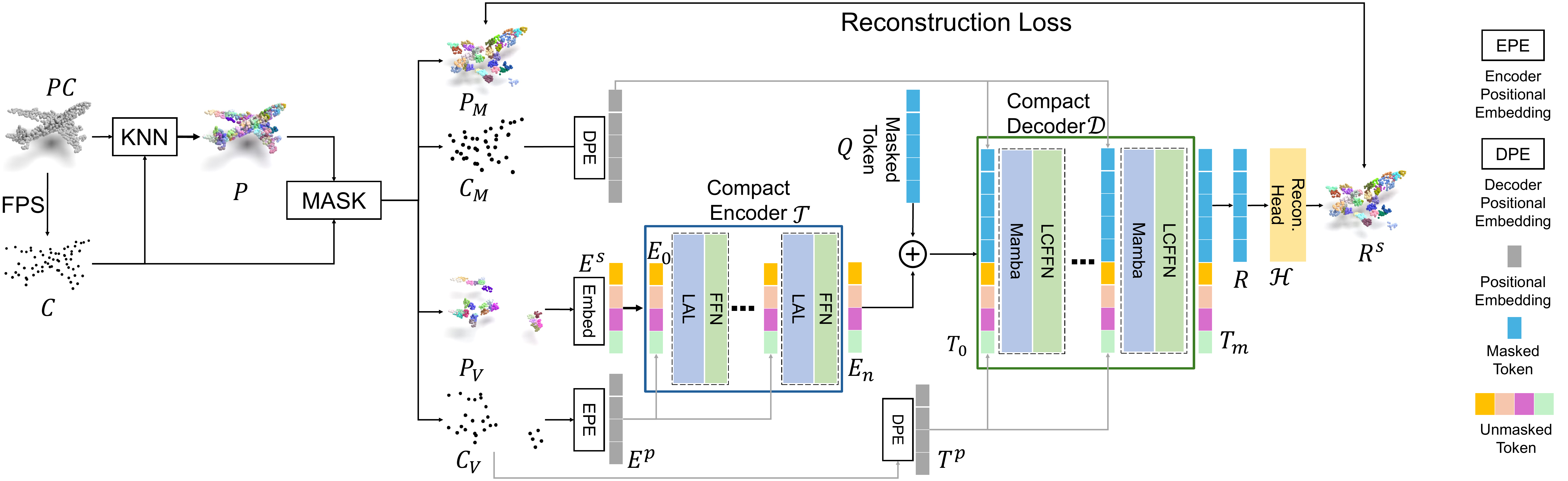}
    \vspace{-15pt}
    \caption{The pipeline of our Locally Constrained Compact Model (LCM) with Point-MAE pre-training. Our LCM consists of a locally constrained compact encoder and a locally constrained Mamba-based decoder.
    }\label{framework}
    \end{center}
\end{figure*}

Standard Transformer~\cite{attention} architecture requires computing the correlation between each patch with all input patches, resulting in quadratic complexity. While this architecture performs well in language data, its effectiveness in point cloud data has been under-explored. Not all points are equally important. As illustrated in Figure \ref{hitmap}, the key points for aircraft recognition are mainly distributed on the wings, while for vase recognition, they are primarily located on the bottom of the vase. Therefore, directly skipping the attention computation for less important points provides a straightforward solution.

We first replaced the computation of global attention for all patch tokens with calculations top-K attentions in both feature and geometric space.
As shown in Figure \ref{comp}, our empirical observations indicate that: \textbf{1)} In self-attention, it is often more effective to use attention weights based on the top-K most important patch tokens rather than using all patch; \textbf{2)} Compared to using top-K attention in a dynamic feature space, employing top-K attention in a static geometric space yields nearly identical representational capacity and offers the advantage of a smaller K value. 
Although this naive method of masking out unimportant attention still exhibits quadratic complexity, this redundancy reduction idea not only brings performance improvements but also provides a direction for further optimizing computational efficiency.

\subsection{The Pipeline of Masking Point Modeling with LCM}
\label{subsec:pipe}
The overall architecture of our Locally constrained Compact Model (LCM) is shown in Figure \ref{framework}. The specific process is as follows.

\textbf{Patching, Masking, and Embedding.} 
Given an input point cloud $\bm {PC}\in \mathbb{R}^{L\times 3}$ with $L$ points, we initially downsample a central point cloud $\bm {C}\in \mathbb{R}^{N\times 3}$ with $N$ points by farthest point sampling (FPS). Then, we perform K-Nearest Neighborhood (KNN) around $\bm {C}$ to get point patches $\bm {P}\in \mathbb{R}^{N \times K \times 3}$. Following this, we randomly mask a portion of $\bm {C}$ and $\bm {P}$, resulting in masked elements $\bm {C_M}\in \mathbb{R}^{(1-r)N\times 3}$ and $\bm {P_M}\in \mathbb{R}^{(1-r)N\times K \times 3}$ and unmasked elements $\bm {C_V}\in \mathbb{R}^{rN\times 3}$ and $\bm {P_V}\in \mathbb{R}^{rN\times K \times 3}$, where $r$ denotes the unmask ratio. Finally, we use MLP-based embedding layer (Embed) and position encoding layer (PE) respectively to extract semantic tokens $\bm {E_0}\in \mathbb{R}^{rN\times d}$ and central position embedding $\bm {E_p}\in \mathbb{R}^{rN\times d}$ for the unmasked patches, where $d$ is the feature dimension.

\textbf{Encoder.}
We employ our locally constrained compact encoder ${\mathcal T}$ to extract features from the unmasked features $\bm {E_0}$. It consists of $n$ stacked encoder layers, each layer incorporating a local aggregation layer and a feedforward neural network, detailed in Figure \ref{framework}. For the input feature $\bm {E_{i-1}}$ of the $i$-th layer, after adding its positional embedding $\bm {E^p}$, it feeds to the $i$-th encoding layer ${\mathcal T}_i$ to obtain the feature $\bm {E_i}$. Therefore, the forward process of each encoder layer is defined as:
\begin{gather}
    \label{eq1}
    \bm {E_i} = {\mathcal T}_i(\bm {E_{i-1}} + \bm {E^p}), \quad i = 1,...,n
\end{gather}
\textbf{Decoder.}
In the decoding phase, although various MPMs have different decoding strategies, they can generally be divided into feature-level or coordinate-level reconstruction, and their decoders mostly rely on the Transformer architecture. Here, we illustrate the decoding process of our locally constrained Mamba-based decoder using the coordinate-level reconstruction method Point-MAE~\cite{pointmae} as an example. 

We first concatenate unmasked tokens $\bm {E_{n}}\in \mathbb{R}^{rN\times d}$ before the randomly initialized masked tokens $\bm Q\in \mathbb{R}^{(1-r)N\times d}$ to obtain the input $\bm {T_{0}}\in \mathbb{R}^{N\times d}$ for the decoder. Then, we separately calculate the positional encoding for unmasked patches $\bm {T_{V}^p}\in \mathbb{R}^{rN\times d}$ and masked patches $\bm {T_{M}^p}\in \mathbb{R}^{(1-r)N\times d}$, and then concatenate them together to obtain the positional embeddings $\bm {T^{p}}\in \mathbb{R}^{N\times d}$, shared by all layers of the decoder. Finally, for the input feature $\bm {T_{i-1}}$ of the $i$-th decoder layer, after adding their positional embeddings $\bm {T^{p}}$, they are passed into the $i$-th decoder layer ${\mathcal D}_i$ to compute the output features $\bm {T_{i}}$. Therefore, the forward process of each decoder layer is defined as:
\begin{gather}
    \label{eq2}
    \bm {T_i} = {\mathcal D}_i(\bm {T_{i-1}} + \bm {T^p}), \quad i = 1,...,m,
\end{gather}
\textbf{Reconstruction.}
We utilize the features $\bm R =\bm {T_m}[rN:]$ decoded by the decoder to perform the 3D reconstruction. We employ multi-layer MLPs to construct coordinates reconstruction head ${\mathcal H}$ and our reconstruction target is to recover the relative coordinates $\bm R_{M} = \mathcal H(\bm R)$ of the masked patches. We use the $\bm {l_2}$ Chamfer Distance \cite{cdloss} ($\mathcal {CD}$) as reconstruction loss. Therefore, our loss function $\mathcal {L}$ is as follows
\begin{gather}
    \label{eq1}
    \mathcal {L}=\mathcal {CD}(\bm {R_M},\bm {P_M})
\end{gather}
\subsection{Locally Constrained Compact Encoder}
\label{subsec:encoder}

\begin{figure*}[t]
    \begin{center}
    \includegraphics[width=\linewidth]{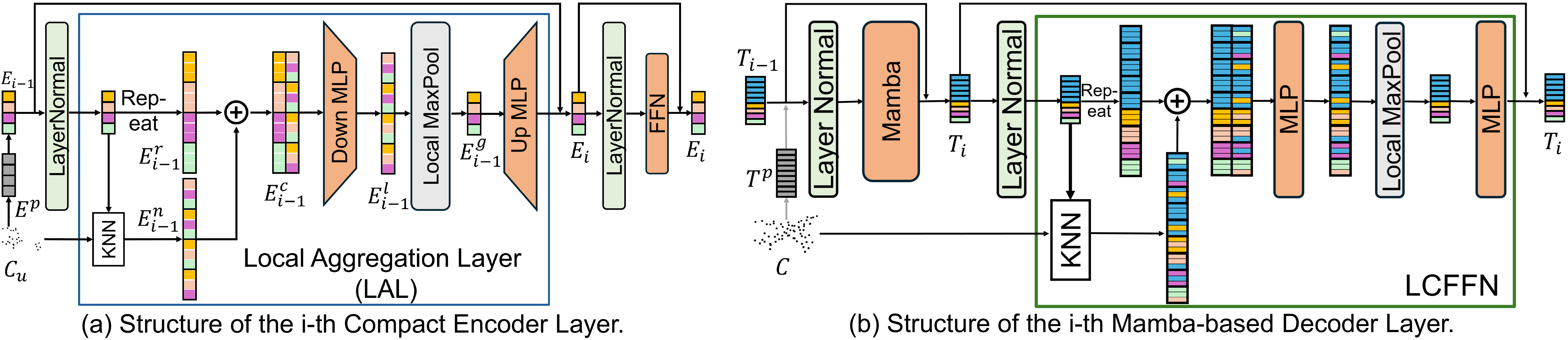}
    \caption{The structure of $i$-th locally constrained compact encoder layer (a) and $i$-th  locally constrained Mamba-based decoder layer (b).
    }\label{all}
    \end{center}
\end{figure*}

The classical Transformer~\cite{attention} relies on the self-attention mechanism to perceive long-range correlations among all patches globally and has achieved great success in language and image domains. However, there remains uncertainty about whether directly transferring a Transformer-based encoder is suitable for point cloud data. Firstly, applications of point clouds are more inclined towards practical embedded devices such as robots or VR headsets. The hardware resources of these devices are limited, imposing higher limits on the model size and complexity, and the Transformer-based backbone demands significantly more resources than traditional networks, as illustrated in Table \ref{class}. Secondly, extensive research~\cite{pointnet++,dgcnn,3detr} and our empirical observation as illustrated in Figure \ref{comp} also indicate that the perception of local geometry in point cloud data far outweighs the need for global perception. Therefore, the computation of long-range correlations in self-attention leads to a considerable amount of redundant calculations. To address these practical issues, we propose a locally constrained compact encoder.

Our compact encoder consists of $n$ stacked compact encoder layers, each layer comprising a local aggregation layer (LAL) and a feed-forward network (FFN), as shown in Figure \ref{all} (a). 
For the $i$-th encoder layer, the output ($\bm {E_{i-1}}$) of the preceding layer, added with the positional embedding and normalized by layer normal, is initially fed to the Local Aggregation Layer (LAL) for aggregating local geometric. Afterward, the result is added to the input residual, passed through layer normalization, and finally fed into a Feed-forward Network (FFN) to obtain the ultimate output feature ($\bm {E_{i}}$). This process can be formalized as follows,
\begin{gather}
    \label{eq1}
    \bm {E_{i}} = \bm {E_{i-1}} + l_i(n_i^1(\bm {E_{i-1}}), \bm {C_{u}}) \\
    \bm {E_{i}} = \bm {E_{i}} + f_i(n_i^2(\bm {E_{i}}))
\end{gather}
where $l_i(\cdot)$ represents the LAM, $n_i^1(\cdot)$ and $n_i^2(\cdot)$ represents layer normalization, and $f_i(\cdot)$ represents the FFN.

In the local aggregation layer, we first use the k-nearest neighbors algorithm based on the central coordinates $\bm {C_{u}}$ of the features $\bm {E_{i-1}}$ to find the $k$ nearest neighbors feature $\bm {E_{i-1}^n}\in \mathbb{R}^{rkN\times d}$ for each token in $\bm {E_{i-1}}$. We then replicate each token of $\bm {E_{i-1}}$ $k$ times and concatenate them with their corresponding neighbors to obtain $\bm {E_{i-1}^c}\in \mathbb{R}^{rkN\times 2d}$. 
Next, Down MLP performs a non-linear mapping on all local neighboring features to capture local geometric information. Subsequently, local max pooling is applied to aggregate all local features for each patch. Finally, Up MLP maps all patches to obtain locally enhanced features $\bm {E_{i}}\in \mathbb{R}^{rN\times d}$. Our LAL consists of only two simple MLP layers, significantly reducing the network’s parameters. Additionally,  since static local geometric constraints only need to be computed once at the beginning and are shared across all layers thereafter, it avoids dynamic attention computations in each layer, significantly reducing computational requirements.
It uses only two MLPs for information mapping, greatly reducing the network's parameters. 

\subsection{Locally Constrained Mamba-based Decoder}

The decoder for mask point modeling needs to recover information about masked patches based on the features $\bm {E_{n}}$ extracted from unmasked patches by the encoder. A common approach is to concatenate the features $\bm {E_{n}}$ of unmasked patches before randomly initialized features $\bm Q$ of masked patches as the input to the decoder, as shown in Figure \ref{framework}. However, at this point, there is a significant difference in information density between features $\bm {E_{n}}$ and $\bm Q$. The Transformer architecture and our local aggregation layer both treat each token in the input as equally important initially, it works well when the information density of all tokens is similar. It does not adapt well to cases where there is a large difference in information density in the input.

To efficiently extract more geometric priors from unmasked features $\bm {E_{n}}$, we were inspired by the Mamba~\cite{mamba} model in time sequence and proposed using a Mamba-based decoder. This decoder can extract more prior information from the preceding tokens in the sequence based on the input order to aid the learning of subsequent tokens. Initially, we simply replaced the self-attention layer in the original Transformer-based~\cite{attention} decoder with the state space model (SSM) layer from Mamba. We also sorted the input sequence based on the order of each patch's center point coordinates, creating a naive Mamba-based decoder. Our experiments in Section \ref{mamba} revealed that although this naive decoder is efficient enough, the simple sorting method cannot effectively model the complex spatial geometry of point clouds and leads to a strong dependence on the order of input patches.

To ensure that the SSM fully perceives the spatial geometry of point clouds, we further introduced the concept of local constraints from the local aggregation layer into the feedforward neural network layer of our decoder, getting the Local Constraints Feedforward Network (LCFFN). By feeding the tokens outputted by the SSM layer into the LCFFN, the LCFFN can implicitly exchange information between geometrically adjacent patches based on their central coordinates. This eliminates the limitation in the SSM layer where explicit sequential input fails to perceive complex geometry fully. Finally, in Section \ref{subsec:info}, we also qualitatively explain from an information theory perspective that this Mamba-based architecture has greater reconstruction potential compared to the Transformer.

Our Mamba-based decoder consists of $m$ stacked decoder layers, each layer comprising a Mamba SSM layer and a local constraints feedforward network (LCFFN), as shown in Figure \ref{all} (b). For the $i$-th decoder layer, we first add the output ($\bm {T_{i-1}}$) of the previous layer with the positional embeddings ($\bm {T^p}$) and normalize it through layer normalization. Then, we use the Mamba SSM layer ($s_i(\cdot)$) to perceive geometry from unmasked features and predict masked features. Finally, in the LCFFN ($f_i^l(\cdot)$), we further perceive shape priors based on the central coordinates of each token from its geometrically adjacent tokens. This process can be formalized as follows:
\begin{gather}
    \label{eq1}
    \bm {T_{i}} = \bm {T_{i-1}} + s_i(n_i^1(\bm {T_{i-1}})) \\
    \bm {T_{i}} = \bm {T_{i}} + f_i^{l}(n_i^2(\bm {T_{i}}), \bm {C})
\end{gather}

\label{subsec:decoder}

\section{Experiments}
\label{subsec:expall}

\subsection{Pre-training}

We pre-training our LCM using five different pretraining strategies: Point-BERT~\cite{pointbert}, MaskPoint~\cite{maskpoint}, Point-MAE~\cite{pointmae}, Point-M2AE~\cite{m2ae}, and ACT~\cite{act}. For a fire comparison, we use ShapeNet~\cite{shapenet} as our pre-training dataset, encompassing over 50,000 distinct 3D models spanning 55 prevalent object categories. 
For the hyperparameter settings during the pretraining phase, we used the same settings as previous methods.

\subsection{Fine-tuning on Downstream Tasks}
\label{subsec:exp}

We assess the performance of our LCM by fine-tuning our models on various downstream tasks, including object classification, scene-level detection, and part segmentation. 

\subsubsection{Object Classification} 
\label{subsec:class}

We initially assess the overall classification accuracy of our pre-trained models on both real-scanned (ScanObjectNN~\cite{scanobjectnn}) and synthetic (ModelNet40~\cite{modelnet}) datasets. ScanObjectNN is a prevalent dataset consisting of approximately 15,000 real-world scanned point cloud samples from 15 categories. These objects represent indoor scenes and are often characterized by cluttered backgrounds and occlusions caused by other objects. For the ScanObjectNN dataset, we sample 2048 points for each instance and report results without voting mechanisms. 
We applied simple scaling and rotation data augmentation of previous work~\cite{pointmae,act} in the downstream setting of ScanObjectNN.
We reported the results of different models under our downstream setting, with \bv marking the results. 
For the ModelNet40 dataset, due to space limitation, we will further analyze its results in Section \ref{subsec:modelnet}.

To ensure a fair comparison, we conducted our experiments following the standard practices in the field (as used in previous work~\cite{pointbert,pointmae,maskpoint,m2ae,act}). For each point cloud classification experiment, we used eight different random seeds (0-7) to ensure the robustness and reliability of our results. The performance reported in Table 1 represents the \textbf{average accuracy} achieved across these eight trials for each model configuration.

\begin{table*}[t]
  \centering
  \caption{Classification accuracy on real-scanned point clouds (ScanObjectNN). We report the overall accuracy (\%) on three variants. "\#Params" represents the model's parameters and FLOPs refer to the model's floating point operations. GPT, CL, and MPM respectively refer to pre-training strategies based on autoregression, contrastive learning, and masked point modeling. \bh is the reported results from the original paper. \bv is the result reproduced in our downstream settings. }
  \resizebox{\textwidth}{!}{
    \begin{tabular}{lclllll}
    \toprule
    \multirow{2}[4]{*}{Method} & \multirow{2}[4]{*}{Pretrain} & \multicolumn{1}{l}{\multirow{2}[4]{*}{\#Params(M)}} & \multirow{2}[4]{*}{FLOPs(G)} & \multicolumn{3}{c}{ScanObjectNN} \\
    \cmidrule(lr){5-7}           &       &       &       & OBJ-BG & OBJ-ONLY & PB-T50-RS \\
    \midrule
    \multicolumn{7}{c}{\textit{Supervised Learning Only}} \\
    \midrule
    \bh PointNe~\cite{pointnet}  & \ding{56}     & 3.5   & 0.5   & 73.3  & 79.2  & 68.0   \\
    \bh PointNet++~\cite{pointnet++} & \ding{56}     & 1.5   & 1.7   & 82.3  & 84.3  & 77.9 \\
    \bh PointMLP~\cite{pointmlp}  & \ding{56}     & 12.6  & 31.4  & -     & -     & 85.2    \\
    \bh Transformer~\cite{attention}  & \ding{56}   & 22.1 & 4.8 &  86.75 & 86.92 & 80.78 \\
    \bh PointMamba~\cite{pointmamba} & \ding{56}   & 12.3 & -  & 88.30 & 87.78 & 82.48 \\
    \bh SFR~\cite{sfr}  & \ding{56}  & -     & -     & -  & -  & 87.80 \\
    \bv Transformer~\cite{attention}  & \ding{56}   & 22.1 & 4.8 & 91.95 & 91.39 & 86.65  \\
    \rowcolor{mycolor} \bv LCM (Ours)  & \ding{56}   & 2.7\textcolor{blue}{($\downarrow$ 88\%)} & 1.3\textcolor{blue}{($\downarrow$ 73\%)} & 92.77\textcolor{blue}{($\uparrow$ 0.82)} & 91.54\textcolor{blue}{($\uparrow$ 0.15)}& 87.75\textcolor{blue}{($\uparrow$ 1.10)}  \\
    \midrule
    \multicolumn{7}{c}{\textit{Self-Supervised Learning}} \\
    \midrule
    \bh Point-BERT~\cite{pointbert} & MPM   & 22.1  & 4.5   & 87.43 & 88.12 & 83.07 \\
    \bh MaskPoint~\cite{maskpoint} & MPM   & 22.1  & 4.5   & 89.30 & 88.10 & 84.30  \\
    \bh Point-MAE~\cite{pointmae} & MPM   & 22.1  & 4.8   & 90.02 & 88.29 & 85.18   \\
    \bh Point-MAE w/ IDPT~\cite{idpt}   & MPM  & 23.3  & 7.1   & 91.22 & 90.02 & 84.94  \\
    \bh Point-MAE w/ DAPT~\cite{dapt}   & MPM  & 22.7  & 5.0   & 90.88 &  90.19 & 85.08  \\
    \bh Inter-MAE~\cite{intermae} & MPM   & 22.1  & 4.8   & 88.70 & 89.60& 85.40   \\
    \bh Point-M2AE~\cite{m2ae} & MPM   & 12.9  & 7.9   & 91.22 & 88.81 & 86.43  \\
    \bh ACT~\cite{act}   & MPM  & 22.1  & 4.8   & 93.29 & 91.91 & 88.21  \\
    \bh PointGPT-B~\cite{pointgpt} & GPT   & 120.5 & 36.2  & 93.60 & 92.50 & \textbf{89.60} \\
    \bh PointMamba~\cite{pointmamba} & MPM   & 12.3 & -  & 93.29 & 91.91 & 88.17 \\
    \bv Point-BERT~\cite{pointbert} & MPM   & 22.1  & 4.5   & 92.48 & 91.60 & 87.91 \\
    \bv MaskPoint~\cite{maskpoint} & MPM   & 22.1  & 4.5   & 92.17 & 91.69 & 87.65  \\
    \bv Point-MAE~\cite{pointmae} & MPM   & 22.1  & 4.8   & 92.67 & 92.08 & 88.27 \\
    \bv Point-M2AE~\cite{m2ae} & MPM   & 12.9  & 7.9   & 93.12 & 91.22 & 88.06 \\
    \bv ACT~\cite{act}   & MPM  & 22.1  & 4.8   & 92.08 & 91.70 & 87.52  \\
    \rowcolor{mycolor} \bv Point-BERT \textbf{w/ LCM}   & MPM   & 3.1 \textcolor{blue}{($\downarrow$ 86\%)} & 2.5 \textcolor{blue}{($\downarrow$ 44\%)} & 93.55 \textcolor{blue}{($\uparrow$ 1.07)}& 92.43 \textcolor{blue}{($\uparrow$ 0.83)}& 88.57 \textcolor{blue}{($\uparrow$ 0.66)} \\
    \rowcolor{mycolor} \bv MaskPoint \textbf{w/ LCM}  & MPM   & 3.1 \textcolor{blue}{($\downarrow$ 86\%)} & 2.5 \textcolor{blue}{($\downarrow$ 44\%)} & 93.31 \textcolor{blue}{($\uparrow$ 1.14)} & 91.98 \textcolor{blue}{($\uparrow$ 0.29)} & 87.75 \textcolor{blue}{($\uparrow$ 0.10)} \\
    \rowcolor{mycolor} \bv Point-MAE \textbf{w/ LCM}  & MPM   & 2.7 \textcolor{blue}{($\downarrow$ 88\%)} & \textbf{1.3} \textcolor{blue}{($\downarrow$ 73\%)} & \textbf{94.51} \textcolor{blue}{($\uparrow$ 1.84)} & 92.75 \textcolor{blue}{($\uparrow$ 0.67)} & 88.87 \textcolor{blue}{($\uparrow$ 0.60)} \\
    \rowcolor{mycolor} \bv Point-M2AE \textbf{w/ LCM}  & MPM   & \textbf{2.5} \textcolor{blue}{($\downarrow$ 81\%)} & 6.7 \textcolor{blue}{($\downarrow$ 15\%)} & 93.83 \textcolor{blue}{($\uparrow$ 0.71)} & 92.41 \textcolor{blue}{($\uparrow$ 1.19)} & 88.38 \textcolor{blue}{($\uparrow$ 0.32)} \\
    \rowcolor{mycolor} \bv ACT \textbf{w/ LCM} & MPM   & 3.1 \textcolor{blue}{($\downarrow$ 86\%)} & 2.8 \textcolor{blue}{($\downarrow$ 42\%)} & 94.13 \textcolor{blue}{($\uparrow$ 2.05)} & \textbf{92.66} \textcolor{blue}{($\uparrow$ 0.96)} & 88.57 \textcolor{blue}{($\uparrow$ 1.05)} \\
    \bottomrule
    \end{tabular}%
  }
  \label{class}%
  \vspace{-0.35cm}
\end{table*}%

As presented in Table \ref{class}, our model has many exciting results. 1) \textbf{Lighter}, \textbf{faster}, and \textbf{more powerful}. When trained from scratch using supervised learning only, our LCM model demonstrates performance improvements of 0.82\%, 0.15\%, and 1.10\% across three variant datasets compared to the Transformer architecture. Similarly, after pre-training (\eg, Point-MAE), our model outperformed the standard Transformer by 1.84\%, 0.67\%, and 0.60\% across the three variants of the ScanObjectNN dataset. Notably, these improvements are achieved despite an \textbf{88\%} reduction in parameters and a \textbf{73\%} reduction in FLOPs. This improvement is exciting as it indicates that our architecture is better suited for point cloud data compared to the standard Transformer. Additionally, due to its extremely high efficiency, it provides strong support for the practical deployment of these pre-trained models. 2) \textbf{Universal}. We have replaced the original Transformer architecture with our LCM model in five different MPM-based pre-training methods. All experimental results are exciting as our model achieved universal performance improvements with fewer parameters and computations, highlighting the versatility of our model. In the future, we will further adapt to additional pre-training methods.

\subsubsection{Object Detection} We further assess the object detection performance of our pre-trained model on the more challenging scene-level point cloud dataset, ScanNetV2~\cite{scannet}, to evaluate our model's scene understanding capabilities. 
Following the previous pre-training work~\cite{maskpoint,act}, we use 3DETR~\cite{3detr} as the baseline and only replace the Transformer-based encoder of 3DETR with our pre-trained compact encoder. Subsequently, the entire model is fine-tuned for object detection. In contrast to previous approaches~\cite{maskpoint,act,pimae}, which necessitate pre-train on large-scale scene-level point clouds like ScanNet, our approach directly utilizes models pre-trained on ShapeNet. This further emphasizes the generalizability of our pre-trained models.

Table \ref{dete} showcases our experimental results, our compact model has shown significant improvements in scene-level point cloud data, such as Point-MAE~\cite{pointmae} achieving a 5.2\% improvement in $AP_{25}$ and a 6.0\% improvement in $AP_{50}$ compared to the Transformer. This improvement is remarkable, and we believe this is primarily due to the presence of a large number of background and noise points in the scene-level point cloud. Using a local constraint modeling approach effectively filters out unimportant background and noise, allowing the model to focus more on meaningful points.

\begin{figure*}[t]
\begin{minipage}[h]{0.48\linewidth}
\centering
\small
\vspace{-0.35cm}
\tabcaption{Object detection results on ScanNetV2. We adopt the average precision with 3D IoU thresholds of 0.25 ($AP_{25}$) and 0.5 ($AP_{50}$) for the evaluation metrics. \textcolor{red}{$^\dagger$} is our reproduction results, due to the lack of detection code in their paper. }
\vspace{0.4cm}
\label{dete}
\begin{adjustbox}{width=\linewidth}
    \begin{tabular}{lcll}
    \toprule
     Methods          &  Pretrain     & $AP_{25}$  & $AP_{50}$ \\
    \midrule
    \multicolumn{4}{c}{\textit{Supervised Learning Only}} \\
    \midrule
     VoteNet~\cite{votenet} & \ding{56}  & 58.6  & 33.5 \\
     3DETR~\cite{3detr}(\textit{baseline}) & \ding{56}  & 62.1  & 37.9 \\
     Transformer~\cite{attention} & \ding{56}  & 60.5  & 40.6 \\
    \rowcolor{mycolor}  LCM (\textbf{Ours}) & \ding{56}  & 63.8 \textcolor{blue}{($\uparrow$ 3.3)}  & 46.4 \textcolor{blue}{($\uparrow$ 5.8)} \\
    \midrule
    \multicolumn{4}{c}{\textit{Self-Supervised Learning}} \\
    \midrule
     PointContrast~\cite{pointcontrast} & CL    & 58.5  & 38.0 \\
     STRL~\cite{strl}  & CL    & -     & 38.4 \\
     Point-BERT~\cite{pointbert} & MPM   & 61.0  & 38.3 \\
     PiMAE~\cite{pimae} & MPM  & 62.6  & 39.4 \\
     Point-MAE\textcolor{red}{$^\dagger$}~\cite{pointmae} & MPM   & 59.5  & 41.2 \\
     Point-M2AE\textcolor{red}{$^\dagger$}~\cite{m2ae} & MPM   & 60.0  & 41.4 \\
     ACT~\cite{act}   & MPM  & 63.8  & 42.1 \\
     DepthContrast~\cite{depthcontrast} & CL    & 64.0  & 42.9 \\
     MaskPoint~\cite{maskpoint} & MPM   & 64.2  & 42.1 \\
    \rowcolor{mycolor}  Point-BERT~\cite{pointbert} \textbf{w/ LCM} & MPM   & \textbf{65.3} \textcolor{blue}{($\uparrow$ 4.3)}  & \textbf{47.3} \textcolor{blue}{($\uparrow$ 9.0)} \\
    \rowcolor{mycolor}  Point-MAE~\cite{pointmae} \textbf{w/ LCM} & MPM   & 64.7 \textcolor{blue}{($\uparrow$ 5.2)}  & 47.2 \textcolor{blue}{($\uparrow$ 6.0)} \\
    \rowcolor{mycolor}  Point-M2AE~\cite{m2ae} \textbf{w/ LCM} & MPM   & 63.5 \textcolor{blue}{($\uparrow$ 3.5)}  & 44.0 \textcolor{blue}{($\uparrow$ 2.6)} \\
    \rowcolor{mycolor}  ACT~\cite{act} \textbf{w/ LCM} & MPM   & 65.0 \textcolor{blue}{($\uparrow$ 1.2)}  & 45.8 \textcolor{blue}{($\uparrow$ 3.7)} \\
    \rowcolor{mycolor}  MaskPoint~\cite{maskpoint} \textbf{w/ LCM} & MPM   & 65.3 \textcolor{blue}{($\uparrow$ 1.1)}  & 46.3 \textcolor{blue}{($\uparrow$ 4.2)} \\
    \bottomrule
    \end{tabular}%
\vspace*{2pt}
\end{adjustbox}
\end{minipage}\quad
\begin{minipage}[h]{0.48\linewidth}
\centering
\small
\vspace{-0.35cm}
\tabcaption{Part segmentation results on the ShapeNetPart. The mean IoU across all categories, i.e., $\mathrm{mIoU}_{c}$ (\%), and the mean IoU across all instances, i.e., $\mathrm{mIoU}_{I}$ (\%) are reported.}
\vspace{0.3cm}
\label{seg}
\begin{adjustbox}{width=\linewidth}
\centering
    \begin{tabular}{lcll}
    \toprule
    Methods & Pretrain & $\mathrm{mIoU}_{c}$ & $\mathrm{mIoU}_{I}$ \\
    \midrule
    \multicolumn{4}{c}{\textit{Supervised Learning Only}} \\
    \midrule
    PointNet++~\cite{pointnet++} &  \ding{56} & 81.9  & 85.1 \\
    DGCNN~\cite{dgcnn}  &  \ding{56} & 82.3  & 85.2 \\
     Transformer~\cite{attention} & \ding{56} & 83.9  & 86.0 \\
    \rowcolor{mycolor}  LCM (Ours) & \ding{56} & 84.6 \textcolor{blue}{($\uparrow$ 0.7)}  & 86.3 \textcolor{blue}{($\uparrow$ 0.3)} \\
    \midrule
    \multicolumn{4}{c}{\textit{Self-Supervised Learning}} \\
    \midrule
    Transformer-OcCo~\cite{wang2021unsupervised} & CL & 83.4  & 85.1 \\
    PointContrast~\cite{pointcontrast} & CL & -  & 85.1 \\
    CrossPoint~\cite{crosspoint} & CL & -  & 85.5 \\
    Point-BERT~\cite{pointbert} & MPM & 84.1  & 85.6 \\
    IDPT~\cite{idpt} & MPM & 83.8  & 85.9 \\
    MaskPoint~\cite{maskpoint} & MPM & 84.4  & 86.0 \\
    Point-MAE~\cite{pointmae} & MPM & 84.2  & 86.1 \\
    ACT~\cite{act} & MPM   & 84.7  & 86.1 \\
    PointGPT-S~\cite{pointgpt} & MPM & 84.1  & 86.2 \\
    PointGPT-B~\cite{pointgpt} & MPM & 84.5  & 86.4 \\
    Point-M2AE~\cite{m2ae} & MPM & 84.9  & 86.5 \\
    \rowcolor{mycolor}  Point-BERT~\cite{pointbert} \textbf{w/ LCM} & MPM   & 85.0 \textcolor{blue}{($\uparrow$ 0.9)}  & 86.5 \textcolor{blue}{($\uparrow$ 0.9)} \\
    \rowcolor{mycolor}  MaskPoint~\cite{maskpoint} \textbf{w/ LCM} & MPM   & 85.1 \textcolor{blue}{($\uparrow$ 0.7)}  & 86.6 \textcolor{blue}{($\uparrow$ 0.6)} \\
    \rowcolor{mycolor}  Point-MAE~\cite{pointmae} \textbf{w/ LCM} & MPM   & \textbf{85.1} \textcolor{blue}{($\uparrow$ 0.9)}  & 86.6 \textcolor{blue}{($\uparrow$ 0.5)} \\
    \rowcolor{mycolor}  Point-M2AE~\cite{m2ae} \textbf{w/ LCM} & MPM   & 85.0 \textcolor{blue}{($\uparrow$ 0.1)}  & 86.5 \textcolor{blue}{(-)} \\
    \rowcolor{mycolor}  ACT~\cite{act} \textbf{w/ LCM} & MPM   & 85.0 \textcolor{blue}{($\uparrow$ 0.3)}  & \textbf{86.7} \textcolor{blue}{($\uparrow$ 0.6)} \\
    \bottomrule
    \end{tabular}%
\end{adjustbox}
\end{minipage}
\end{figure*}

\subsubsection{Part Segmentation}  We also assess the performance of LCM in part segmentation using the ShapeNetPart dataset~\cite{shapenet}, comprising 16,881 samples across 16 categories. We utilize the same segmentation setting after the pre-trained encoder as in previous works \cite{pointmae,m2ae} for fair comparison. 
As shown in Table \ref{seg}, our LCM-based model also exhibits a clear boost compared to Transformer-based models. These results demonstrate that our model exhibits superior performance in tasks such as part segmentation, which demands a more fine-grained understanding of point clouds.

\subsection{Ablation Study}

\textbf{Effects of Locally Constrained Compact Encoder.} We explore the performance of our locally constrained compact encoder by comparing it with a Transformer-based encoder in classification, detection, and part segmentation. 
The results from Tables \ref{class}, Table \ref{dete}, and Table \ref{seg}, obtained solely through supervised learning from scratch, clearly demonstrate the advantages of our LCM encoder over the Transformer-based encoder in terms of performance and efficiency, particularly in detection tasks, with an improvement of up to 6.0\% in the $AP_{50}$ metric. 

This substantial improvement is attributed to the compact encoder's focused attention on the most crucial information for each point patch, such as local neighborhoods while disregarding unimportant details. This is similar to a redundancy-reducing compression concept, which is crucial for point cloud analysis, especially in large-scale scene-level point clouds where significant redundancy and noise points often exist. Our local constraint approach enables the model to focus on critical areas, leading to a combined improvement in efficiency and performance. Moreover, this redundancy-reducing concept helps our model avoid overfitting the training dataset. We provide detailed explanations of this phenomenon in the Section \ref{subsec:overfit}.

\begin{figure*}[t]
\begin{minipage}[h]{0.5\linewidth}
\centering
\small
\vspace{-0.35cm}
\tabcaption{Effects of the Network Structure of the Locally Constrained Compact Encoder.}
\vspace{0.4cm}
\label{enc_arc}
\begin{adjustbox}{width=\linewidth}
    \begin{tabular}{llllcc}
    \toprule
    \multicolumn{1}{l}{Local} & \multicolumn{1}{l}{Local} & \multicolumn{1}{l}{MLPs} & \multicolumn{1}{l}{FFN} & Param(M) & ScanObjectNN \\
    \midrule
    A & \ding{56}     & \ding{56}     & \ding{52}     & 1.4   & 85.45 \\
    B & \ding{56}     & \ding{52}     & \ding{52}     & 2.3   & 85.74 \\
    C & \ding{52}     & \ding{52}     & \ding{56}     & 1.8   & 87.77 \\
    D & \ding{52}     & \ding{52}     & \ding{52}     & 2.7   & 88.06 \\
    \bottomrule
    \end{tabular}%
\vspace*{2pt}
\end{adjustbox}
\end{minipage}\quad
\begin{minipage}[h]{0.4\linewidth}
\centering
\small
\vspace{-0.35cm}
\tabcaption{Effects of Locally Constrained Mamba-based Decoder.}
\vspace{0.3cm}
\label{dec_arc}
\begin{adjustbox}{width=\linewidth}
\centering
    \begin{tabular}{lc}
    \toprule
    Decoder & ScanObjectNN \\
    \midrule
    Transformer       & 88.76 \\
    LAL        & 88.38 \\
    Mamba      & 88.62 \\
    \midrule
    Transformer w/ LCFFN    & 88.79 \\
    LAL w/ LCFFN    & 88.51 \\
    \textbf{Mamba w/ LCFFN}     & 89.35 \\
    \bottomrule
    \end{tabular}%
\end{adjustbox}
\end{minipage}
\end{figure*}

\textbf{Effects of the Network Structure of the Locally Constrained Encoder.} As shown in Figure \ref{all}(a), each layer of our locally constrained compact encoder consists primarily of three parts: a locally constrained unit based on k-NN, MLPs mapping unit composed of Down MLP and Up MLP, and the final FFN layer. We explore the effects of each unit separately. Specifically, we train Encoders with different structures from scratch on the ScanObjectNN~\cite{scanobjectnn} dataset and test their classification performance. As shown in Table \ref{enc_arc}, comparing A and B reveals that a simple two-layer MLP without local aggregation does not substantially improve the network's performance. In contrast, the results of C and D compared to A and B demonstrate a significant performance improvement. This improvement is mainly attributed to the introduction of local geometric perception and aggregation. Comparing the results of C and D, the introduction of FFN brings a slight improvement. Therefore, FFN is not indispensable in our compact encoder, but we choose to incorporate FFN to further perform mapping. These experiments further indicate the necessity of local geometric perception and aggregation for point cloud feature extraction.

\textbf{Effects of Locally Constrained Mamba-based Decoder.} We further compared the impact of different decoder designs during the pre-training phase. Specifically, we compared the vanilla Transformer-based decoder, our LAL-based decoder, and the vanilla Mamba-based architecture, as well as their performance after incorporating LCFFN. As shown in Table \ref{dec_arc}, the results indicate that the Vanilla Transformer slightly outperforms the Vanilla Mamba in terms of performance, likely due to the limitation imposed by the simple geometric sequential input sequences on the Vanilla Mamba's capabilities. 
After incorporating LCFFN, the Mamba decoder exhibits a significant improvement due to the introduction of implicit geometric order. In contrast, the Transformer's improvement is slight because the geometric order is already implicitly captured by self-attention.

\subsection{Limitation}
\label{subsec:limit}

Our current model does have limitations in handling dynamic importance perception and long-range dependency modeling. Our design prioritizes efficiency, which can be at odds with the increased complexity required for capturing dynamic importance and long-range dependencies. This focus on efficiency led us to simplify the model in certain aspects, and as a result, we did not fully integrate mechanisms for dynamic importance perception and long-range dependency modeling in this version of our model. Despite these constraints, the current model has demonstrated significant improvements in performance across various tasks. Nevertheless, we also acknowledge that incorporating dynamic importance perception and long-range dependency modeling could further enhance the model's capabilities, particularly in more complex scenarios. We are actively exploring methods to address these limitations in future work. 

\subsection{Conclusion}

In this paper, we propose a compact point cloud model, LCM, specifically designed for masked point modeling pre-training, aiming to achieve an elegant balance between performance and efficiency. Based on the idea of redundancy reduction, we propose focusing on the most relevant point patches ignoring unimportant parts in the encoder, and introducing a local aggregation layer to replace the vanilla self-attention. Considering the varying information density between masked and unmasked patches in the decoder inputs of MPM, we introduce a locally constrained Mamba-base decoder to ensure linear complexity while maximizing the perception of point cloud geometry information from unmasked patches. By conducting extensive experiments across various tasks such as classification and detection, we demonstrate that our LCM is a universal model with significant improvements in efficiency and performance compared to traditional Transformer models. 

\subsection{Acknowledgements}

This work is supported in part by the National Key Research and Development Project of China (Grant No. 2023YFF0905502), the National Natural Science Foundation of China, under Grant (62302309, 62171248),  Shenzhen Science and Technology Program (JCYJ20220818101014030, JCYJ20220818101012025), and the PCNL KEY project (PCL2023AS6-1).

{\normalsize
\bibliographystyle{ieee_fullname}
\bibliography{lcm}
}

\clearpage


\section{Appendix}

\subsection{An Information Theoretic Perspective of Our Mamba-based Decoder for MPM.} 
\label{subsec:info}
\begin{wrapfigure}[13]{r}{0.3\textwidth}
    \centering
    \vspace{-0.5cm}
    \includegraphics[width=0.3\textwidth]{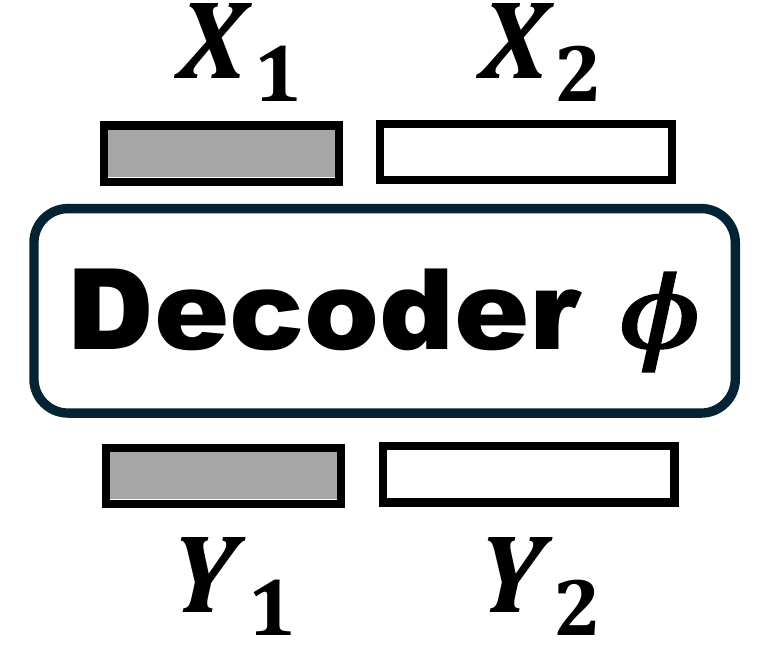}
    \vspace{-15pt}
    \caption{A simple illustration of information processing of MPM decoder.}
    \vspace{-15pt}
    \label{it}
\end{wrapfigure}
Here, we provide an information-theoretic perspective for our decoder design, using mutual information to qualitatively demonstrate that the Mamba-based SSM can perceive more information from unmasked patches to predict masked patches compared to a Transformer-based self-attention. The mutual information between random variables $X$ and $Y$, $I(X; Y)$, measures the amount of information that can be gained about a random variable $X$ from the knowledge about the other random variable $Y$. Therefore, based on the decoder input's different information densities, we can simply divide the input into $X_1$, representing unmasked patches with higher information density, and $X_2$, representing randomly initialized masked patches with lower information density. 
As illustrated in Figure \ref{it}, after being processed by the decoder, $X_1$ and $X_2$ respectively yield outputs $Y_1$ for unmasked patches and $Y_2$ for masked patches. We reconstruct the masked points based on $Y_2$. 

Ideally, $Y_2$ needs to perceive sufficient geometric priors from both $X_1$ and $X_2$ to recover the masked points, more mutual information represents more recovery potential. Therefore, we would like to maximize the mutual information $I(Y_2; X_1, X_2)$. 
In what follows, we demonstrate that the mutual information preserved by our proposed Mamba-based decoder is larger than that of the standard transformer decoder.

\begin{theorem}
Let $Y_1^M,Y_2^M$ and $Y_1^T,Y_2^T$ denote the outputs of the Mamba-based and Transformer-based decoders respectively, $I(Y_2^M; X_1, X_2)$ denote the mutual information preserved by the Mamba-based decoder, and $I(Y_2^T; X_1, X_2)$ denote that of the Transformer-based decoder. 
We have $I(Y_2^M; X_1, X_2) \geq I(Y_2^T; X_1, X_2)$.
\end{theorem}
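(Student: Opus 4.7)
The plan is to invoke the data processing inequality together with an explicit comparison of the functional forms of the two decoders. Since both decoders are deterministic mappings, the mutual information $I(Y_2^{\bullet}; X_1, X_2)$ equals $H(Y_2^{\bullet})$, so the theorem reduces to showing that the Mamba-based decoder produces an output whose induced map from $(X_1, X_2)$ is at least as information-preserving (equivalently, as close to injective) as that of the Transformer-based decoder.

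First, I would write the Mamba output explicitly in terms of the SSM recurrence. The state update consumes $X_1$ first, producing a hidden state $h_1$ that acts as a selective running summary of $X_1$, and then produces $Y_2^M$ by further updating $h_1$ with $X_2$. The key observation is that the state update is a (possibly input-dependent) linear system whose composition over two steps preserves information about $(X_1, X_2)$ almost surely under generic parameter choices. Concretely, one can lower-bound $I(Y_2^M; X_1, X_2)$ by $I(h_1; X_1) + I(Y_2^M; X_2 \mid h_1)$ via the chain rule, and each term is controlled by the invertibility of the corresponding selective-SSM step.

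Second, I would analyze the Transformer output. Here $Y_2^T$ is a softmax-weighted sum of value vectors computed from both $X_1$ and $X_2$, which I would write as $Y_2^T = \sum_i \alpha_i V_i$ with $\alpha_i \ge 0$ and $\sum_i \alpha_i = 1$. This makes it manifest that $Y_2^T$ lies in the convex hull of the value set and is a strictly many-to-one function of $(X_1, X_2)$, since many joint configurations produce the same convex combination. By the data processing inequality applied to the chain $(X_1, X_2) \to \bigl(\{\alpha_i\}, \{V_i\}\bigr) \to Y_2^T$, the averaging step introduces an information loss relative to what the Mamba recurrence can carry forward, yielding $I(Y_2^T; X_1, X_2) \le I(Y_2^M; X_1, X_2)$.

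The main obstacle I foresee is making the comparison strictly quantitative, since both maps are deterministic and the gap comes entirely from the relative non-invertibility of softmax averaging versus the SSM recurrence. I would address this by invoking a Jensen-type inequality on the softmax mixing and by exhibiting explicit input configurations that the Transformer collapses but the Mamba decoder separates, thereby certifying a strict drop in entropy. A secondary subtlety is the treatment of the positional encodings and residual connections shared by both architectures; I would factor these out by conditioning on the positional embedding and applying the chain rule of mutual information so that only the attention-versus-SSM core is compared.
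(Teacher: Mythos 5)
There is a genuine gap: your argument never actually compares the two decoders to each other. The data processing inequality you invoke for the Transformer, applied to the chain $(X_1,X_2)\to\bigl(\{\alpha_i\},\{V_i\}\bigr)\to Y_2^T$, only bounds $I(Y_2^T;X_1,X_2)$ by the information carried by the Transformer's \emph{own} intermediate representation; it says nothing about $I(Y_2^M;X_1,X_2)$. Both maps are deterministic and many-to-one --- the Mamba output $Y_2^M = A X_1 + B X_2$ is itself a lossy linear compression that collapses every pair $(X_1,X_2)$ with the same value of $A X_1 + B X_2$ --- so showing that softmax averaging collapses some configurations cannot certify $H(Y_2^T)\le H(Y_2^M)$; one could symmetrically exhibit configurations that the linear recurrence collapses but attention separates. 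For the same reason, your fallback plan of "exhibiting explicit input configurations that the Transformer collapses but the Mamba decoder separates" establishes at best a strict loss somewhere, not a global inequality between the two mutual informations, and a Jensen-type bound on the softmax mixing does not translate into an information-theoretic comparison. The missing ingredient is precisely the step the paper supplies: an explicit functional relation between the two outputs, namely that (for the parameter correspondence the paper sets up) $Y_2^T = g(Y_2^M)$ for some function $g$, which makes $(X_1,X_2)\to Y_2^M\to Y_2^T$ a Markov chain; the inequality then follows from $I(X_1,X_2;Y_2^T\mid Y_2^M)=0$ together with the chain rule and non-negativity of mutual information. Without some such cross-architecture sufficiency statement, no amount of per-architecture DPI reasoning yields the theorem.

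Two smaller issues: the reduction $I(Y_2^{\bullet};X_1,X_2)=H(Y_2^{\bullet})$ holds only in the discrete setting (for continuous variables and deterministic maps these quantities are generally infinite), and your proposed decomposition $I(Y_2^M;X_1,X_2)\ \ge\ I(h_1;X_1)+I(Y_2^M;X_2\mid h_1)$ is not a valid chain-rule lower bound: since $Y_2^M$ depends on $X_1$ only through $h_1$, the term $I(h_1;X_1)$ can strictly exceed $I(Y_2^M;X_1)$, so the right-hand side is not dominated by the left-hand side in general.
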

\begin{proof}
The first step is to formalize the input-output relation of the two decoding structures. For the Mamba decoder, as defined in ~\cite{s4,mamba}, the output can be expressed as:
\begin{align*}
    Y_2^M &= C \bar{A} \bar{B} X_1 + C \bar{B} X_2\\
    &= A X_1 + B X_2.
\end{align*}
For the Transformer decoder, the attention mechanism can be expressed in the following matrix form:
\begin{align*}
\begin{bmatrix}
X_1^\top W_q^\top W_k X_1 & X_1^\top W_q^\top W_k X_2\\
X_2^\top W_q^\top W_k X_1 & X_2^\top W_q^\top W_k X_2
\end{bmatrix}
\begin{bmatrix}
W_v X_1\\
W_v X_2
\end{bmatrix}
=
\begin{bmatrix}
Y_{1}^T\\
Y_{2}^T
\end{bmatrix}
.
\end{align*}
Thus,
\begin{align*}
Y_{2}^T=X_2^\top W_q^\top W_k X_1 \cdot W_v X_1+X_2^\top W_q^\top W_k X_2 \cdot W_v X_2.
\end{align*}
Compared with the linear relation captured by  $Y_2^M$, $Y_2^T$ models higher-order interactions of the input variables. So for any given Mamba parameters $A$ and $B$, there exists Transformer parameters $W_k, W_q, W_v$ and a function $g$, such that $Y_2^T=g(Y_2^M)$.

As $Y_2^T$ is a function of $Y_2^M$, $(X_1, X_2) \rightarrow Y_2^M \rightarrow Y_2^T$ forms a Markov chain. So $(X_1, X_2)$ and $Y_2^T$ are independent when conditioned on $Y_2^M$, i.e., $p(Y_2^T,X_1,X_2|Y_2^M)=p(Y_2^T|Y_2^M)p(X_1,X_2|Y_2^M)$. According to the definition of conditional mutual information, this implies
$$I(X_1,X_2;Y_2^T|Y_2^M)=0.$$
On the other hand, by the chain rule of mutual information we have
\begin{align*}
I(X_1,X_2;Y_2^M,Y_2^T)&=I(X_1,X_2;Y_2^M)+I(X_1,X_2;Y_2^T|Y_2^M)\\
&=I(X_1,X_2;Y_2^T)+I(X_1,X_2;Y_2^M|Y_2^T).
\end{align*}
Since we already show that $I(X_1,X_2;Y_2^T|Y_2^M)=0$, and mutual information is non-negative, we have
\begin{align*}
I(X_1,X_2;Y_2^M)=I(X_1,X_2;Y_2^T)+I(X_1,X_2;Y_2^M|Y_2^T)\geq I(X_1,X_2;Y_2^T).
\end{align*}
\end{proof}

\subsection{Additional Related Work} 
\textbf{Deep Network Architecture for Point Cloud.}
With the development of deep learning, various deep neural network-based models~\cite{attention,fang2023gifd,latformer,dai2024periodicity,zhang2024vision,ptv2,ptv3,privacy,mamba} have become the mainstream approach for 3D point cloud analysis. PointNet~\cite{pointnet}, a pioneer in point cloud analysis, introduced an MLP-based network to address the disorder of point clouds. Subsequently, PointNet++~\cite{pointnet++} further proposed adaptive aggregation of multiscale features on MLPs and incorporated local point sets for effective feature learning. DGCNN~\cite{dgcnn} introduced the graph convolutional networks dynamically computing local graph neighboring nodes to extract geometric information. PointMLP~\cite{pointmlp} suggested efficient point cloud representation solely relying on pure residual MLPs. Recently, many Transformer-based models~\cite{pct,3detr,pointmae,femae}, benefiting from attention mechanisms, have achieved notable improvements in point cloud analysis. However, this led to a significant increase in model size, posing considerable challenges for practical applications. PointMamba~\cite{pointmamba} first attempted to introduce the Mamba architecture based on the state space model to point clouds, but it still has high complexity and parameters. In this paper, we focus on designing more efficient point cloud architectures specific to pre-training models.

\textbf{State Space Models.}
State Space Models~\cite{hippo,l4d,s4,lssl,s5} (SSMs) originate from classical control theory and have been introduced into deep learning as the backbone of state space transformations. They combine the parallel training capabilities of CNNs with the fast inference characteristics of RNNs, capturing long-range dependencies in sequences while maintaining linear complexity. The Structured State-Space Sequence model~\cite{s4} (S4) is a pioneer work for the deep state-space model in modeling the long-range dependency. S5~\cite{s5} proposed based on S4 and introduces MIMO SSM and efficient parallel scan. GSS~\cite{gss} leverages the gating structure in the gated attention unit to reduce the dimension of the state space module. Recently, Mamba~\cite{mamba} with efficient hardware design and selective state space, outperforms Transformers~\cite{attention} in terms of performance and efficiency. Subsequent works~\cite{visionmamba,vmamba,wang2023selective,umamba,islam2023efficient,fang2024one,nguyen2022s4nd} have attempted to introduce Mamba into the visual domain, achieving significant improvements. For example, Vision Mamba~\cite{visionmamba} and VMamba~\cite{vmamba} directly apply Mamba to image processing and design corresponding scanning methods tailored for image data. As for point cloud, PointMamba~\cite{pointmamba} is the first to introduce Mamba into point cloud analysis, traversing the input sequences from the x, y, and z geometric directions. In this paper, we introduce Mamba into the decoder for masked point modeling and discuss its advantages from an information-theoretic perspective. Additionally, we propose a locally constrained feedforward neural network for Mamba block to adaptively exchange information among geometrically adjacent patches based on their implicit geometry.

\subsection{Implementation Details}
\label{subsec:expdetail}
\paragraph{Top-K Attention Settings in Observation.}
In Figure \ref{comp}, we replace the global attention computation of all patch tokens in Self-Attention with top-K attention computation in both feature space and geometric space to demonstrate the significant amount of redundant computation in the vanilla Transformer. Specifically, after computing all global attention, we further compute a mask matrix. We then add negative infinity to the attention values that need to be masked. After that, we calculate the softmax, where the attention values that were set to negative infinity will become 0, ensuring that the sum of the attention values of the unmasked top-K patches equals 1. We compute different top-K values in both feature space and geometric space, and pretrain the corresponding models. Subsequently, we fine-tune these pretrained models on the three variants of ScanObjectNN using the same top-K attention algorithm, evaluating their accuracy on classification tasks. To minimize error, we report the average accuracy over 10 repeated experiments.

\paragraph{Positional Encodings.}
To complement the 3D spatial information, we apply positional encodings to all encoder and decoder layers. As shown in Figure \ref{framework}, we first use the Encoder Positional Encoding (EPE) to compute the positional encoding $\bm E^p$ for $\bm C_V$, which is then shared across all layers of the encoder. In the decoding stage, we use the Decoder Positional Encoding (DPE) to calculate the positional encoding for the unmasked $\bm C_V$ and the masked patches $\bm C_M$. These positional encodings are concatenated to form decoder positional encodings $\bm T^p$, which is shared across all layers of the decoder.
Following previous work~\cite{pointbert,pointmae}, we utilize a two-layer MLP to encode its corresponding 3D coordinates $\bm {C_V}\in \mathbb{R}^{rN\times 3}$ or $C_M\in \mathbb{R}^{(1-r)N\times 3}$ into $d$-channel vectors $\bm E^p \in \mathbb{R}^{rN\times d}$ or $\bm T^p \in \mathbb{R}^{N\times d}$, and element-wisely add them with the token features before feeding into the attention layer.

\paragraph{Token Embedding.}
We follow the approach of previous works~\cite{pointbert,pointmae} and use a simple PointNet~\cite{pointnet} to map the point patches $\bm P_V\in \mathbb{R}^{rN\times K \times 3}$ from coordinate space to feature space $\bm E^s\in \mathbb{R}^{rN\times d}$. For Point-BERT~\cite{pointbert}, MaskPoint~\cite{maskpoint}, Point-M2AE~\cite{m2ae}, and ACT~\cite{act}, we use the exact same embedding structures as described in their original papers. For Point-MAE~\cite{pointmae}, we further simplify the embedding, using only a two-layer MLP with dimensions 3-128-384 as the embedding, which further reduces the parameters and computational complexity, as shown in Table \ref{class}.

\paragraph{Object Classification.} 
Due to significant differences in the settings used for downstream fine-tuning tasks of point cloud classification on the ScanObjectNN~\cite{scanobjectnn} dataset in previous self-supervised learning methods~\cite{pointbert,pointmae,act,m2ae,maskpoint} , such as input point quantity, data augmentation, and the input of the classification task head, we conducted extensive experiments to obtain a performance-friendly downstream fine-tuning setting. Furthermore, we re-evaluated most of the previous methods under our setting, while also conducting a fair comparison between our LCM model and the previous Transformer model under our setting. We mark the results of our downstream fine-tuning setting with an "\bv" in Table \ref{class}.

It can be observed that, compared to the results reported in the original paper, the fine-tuning results using our downstream settings have achieved significant performance improvements. For instance, Point-BERT has shown improvements of \textbf{5.34\%}, \textbf{3.62\%}, and \textbf{4.99\%} on the three variants of the ScanObjectNN dataset, respectively. This improvement is surprising, indicating that there is further potential to be explored in earlier self-supervised learning methods such as Point-BERT, Point-MAE, etc.

\paragraph{Experiments Compute Resources.}
\label{subsec:resources}
Due to the surprisingly lightweight and efficient of our LCM model, we were able to complete the pre-training tasks using just a single 24GB NVIDIA GeForce RTX 3090 GPU. For downstream classification and segmentation tasks, we used a single RTX 3090 GPU for each. For detection tasks, to accelerate training, we utilized four parallel RTX 3090 GPUs.

\paragraph{3D Object Detection.}
We pre-train and fine-tune Point-MAE for 3D object detection both on ScanNetV2~\cite{scannet}. In our detection experiments on ScanNetV2, we evaluate our model's understanding of scene-level tasks. Specifically, in the downstream detection fine-tuning experiments, we use 3DETR~\cite{3detr} as the baseline model and replace 3DETR's pre-encoder and encoder with our embed layer and compact encoder, respectively, while keeping all other training settings identical to 3DETR. Unlike many previous methods~\cite{maskpoint,act,m2ae} that require retraining models on ScanNet, we initialize the embed layer and compact encoder with models pre-trained directly on ShapeNet~\cite{shapenet}. While this may result in some loss of performance due to the gap between ShapeNet and ScanNet data, it demonstrates the universality of our pre-trained models.

\subsection{Additional Experiments}

\paragraph{Object Classification on ModelNet40.}
\label{subsec:modelnet}
ModelNet40~\cite{modelnet} is a well-known synthetic point cloud dataset, comprising 12,311 meticulously crafted 3D CAD models distributed across 40 categories. Following previous work~\cite{pointbert,pointmae,m2ae}, for the ModelNet40 dataset, we sample 1024 points for each instance and report overall accuracy with voting mechanisms. In ModelNet40, we no longer differentiate between the results reported in the paper and our results, as we use the exact same downstream fine-tuning settings as previous methods~\cite{pointmae,maskpoint,act,m2ae}. Table \ref{class2} presents our experimental results, and the overall conclusions are consistent with Section \ref{subsec:class}. Our LCM model outperforms the Transformer architecture in terms of both efficiency and performance, indicating the superiority of our model.

\begin{table*}[h]
  \centering
  \caption{Classification accuracy on synthetic (ModelNet40) point clouds. In ModelNet40, following previous work, we report the overall accuracy (\%) with voting mechanisms. For a fair comparison, we used the same downstream task settings as in previous studies.}
  \resizebox{0.55\textwidth}{!}{
    \begin{tabular}{lccc}
            \toprule
            Method & \#Params(M) & FLOPs(G) & ModelNet40\\
            \midrule
            \multicolumn{4}{c}{\textit{Supervised Learning Only}} \\
            \midrule
            PointNet~\cite{pointnet}       & 3.5   & 0.5   & 89.2 \\
            PointNet++~\cite{pointnet++}      & 1.5   & 1.7   & 90.7 \\
            DGCNN~\cite{dgcnn}      & 1.8   & 2.4 & 92.9  \\
            PointMLP~\cite{pointmlp}       & 12.6  & 31.4  & 94.5 \\
            P2P-HorNet~\cite{p2p}      & 195.8 & 34.6  & 94.0  \\
            Transformer~\cite{pointmamba}     & 22.1 & 4.8  & 92.3 \\
            PointMamba~\cite{pointmamba}    & 12.3 & -  & 92.4 \\
            Transformer~\cite{pointmamba}     & 22.1 & 2.4 & 92.3 \\
            \rowcolor{mycolor} \textbf{LCM} (Ours)     & 2.7 & 0.6 & 93.6 \\
            \midrule
            \multicolumn{4}{c}{\textit{Self-Supervised Learning}} \\
            \midrule
            Point-BERT~\cite{pointbert}  & 22.1  & 2.3 & 93.2 \\
            CrossNet~\cite{crossnet}  & 1.8  & 2.4 & 93.4 \\
            Inter-MAE~\cite{intermae} & 22.1  & 2.3  & 93.6 \\
            MaskPoint~\cite{maskpoint} & 22.1  & 2.3  & 93.8 \\
            Point-MAE~\cite{pointmae}  & 22.1  & 2.4   & 93.8 \\
            Point-M2AE~\cite{m2ae} & 12.8  & 4.7      & 94.0 \\
            ACT~\cite{act}  & 22.1  & 2.4   & 93.7 \\
            PointGPT-S~\cite{pointgpt}  & 29.2  & 2.3  & 94.0 \\
            PointGPT-B~\cite{pointgpt}   & 120.5 & 18.1 & \textbf{94.2} \\
            PointMamba~\cite{pointmamba}  & 12.3 & - & 93.6 \\
            \rowcolor{mycolor} Point-BERT \textbf{w/ LCM}  & 3.1 & 1.3 & 93.8 \\
            \rowcolor{mycolor} MaskPoint \textbf{w/ LCM}   & 3.1 & 1.3 & 94.1 \\
            \rowcolor{mycolor} PointM2AE \textbf{w/ LCM}   & 2.5 & 1.7 & 94.1 \\
            \rowcolor{mycolor} ACT \textbf{w/ LCM}  & 3.1 & 1.4  & 93.9 \\
            \rowcolor{mycolor} Point-MAE \textbf{w/ LCM}  & 2.7 & 0.6 & \textbf{94.2} \\
            \bottomrule
            \end{tabular}%
  }
  \label{class2}%
\end{table*}%

\textbf{Effects of Locally Constrained K Value.} 
We further explore the impact of using different numbers of neighbors K in local constraints on performance and efficiency. K=1 indicates no consideration of neighboring information. As K increases, the consideration of local geometry for each point patch also increases, but so does the computational complexity. We train object classification from scratch on the PB-RS-T50 variant of ScanObjectNN, and Figure \ref{k-curve} presents our ablation results. The area of the circle represents the computational floating-point operations (FLOPs). We found that a smaller K, such as 5, is sufficient to achieve satisfactory results in terms of performance and efficiency. Performance initially increases slowly, but when K exceeds a certain threshold, it tends to decline. This is mainly due to larger K values introducing excessive redundancy, thereby limiting the learning capacity.

\begin{figure*}[h]
\begin{minipage}[h]{0.5\linewidth}
\centering
\small
\vspace{-0.35cm}
\caption{Effects of locally constrained K value.}
\vspace{0.4cm}
\label{k-curve}
\begin{adjustbox}{width=\linewidth}
\includegraphics[width=0.3\textwidth]{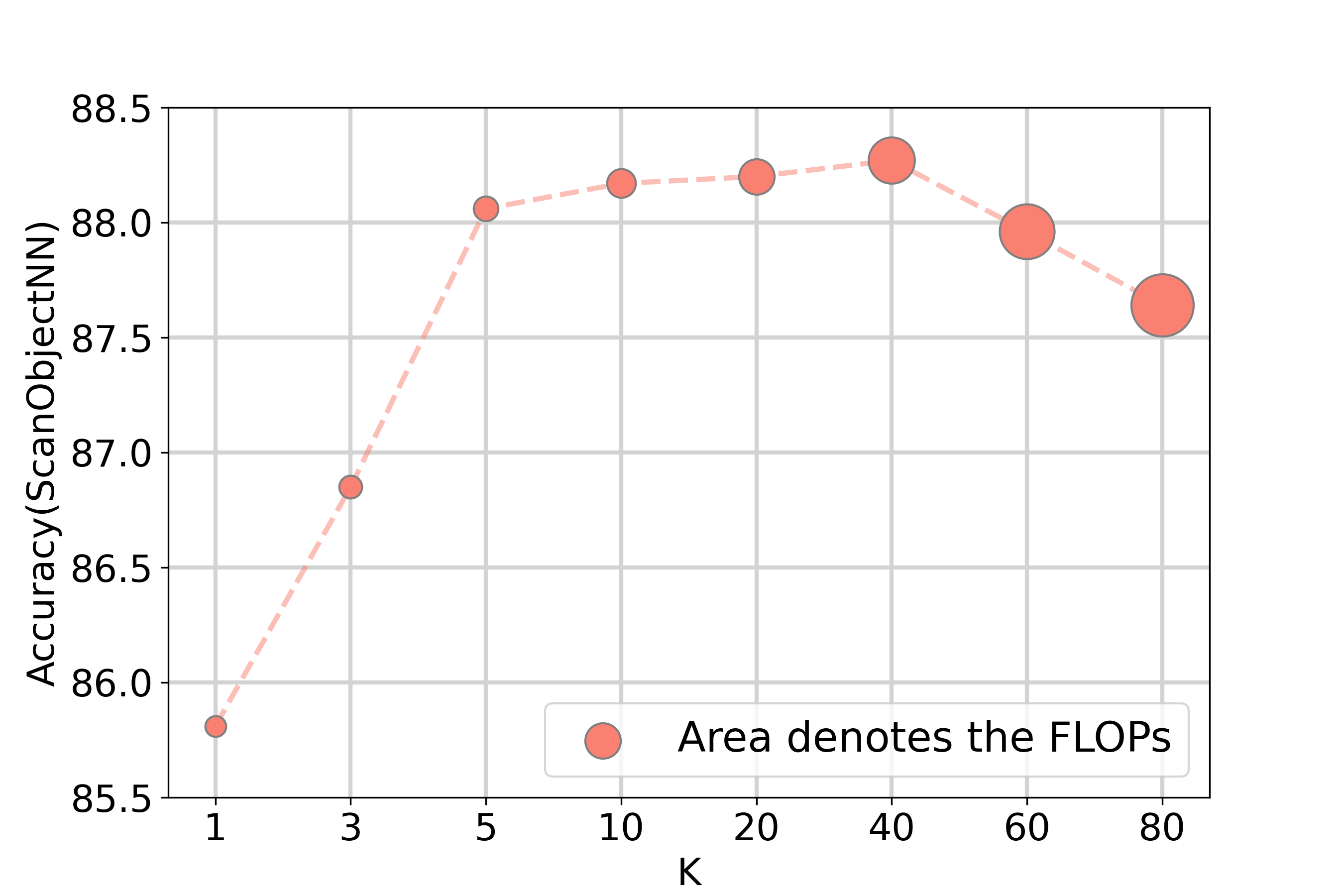}
\end{adjustbox}
\end{minipage}\quad
\begin{minipage}[h]{0.4\linewidth}
\centering
\small
\vspace{-0.35cm}
\tabcaption{Effects of K-NN Space.}
\vspace{0.3cm}
\label{k-space}
\begin{adjustbox}{width=\linewidth}
\centering
    \begin{tabular}{ccc}
    \toprule
    K     & Feature K-NN & Geometry K-NN \\
    \midrule
    1     & 85.70  & 85.81 \\
    5     & 87.65 & 88.06 \\
    10    & 87.51 & 88.17 \\
    20    & 87.20  & 88.20 \\
    \bottomrule
    \end{tabular}%
\end{adjustbox}
\end{minipage}
\end{figure*}

\textbf{Effects of K-NN Space.} 
We further explored the impact of performing K-NN based on Euclidean distance in both the feature space and the geometric space of our compact encoder. Geometric K-NN in the geometric space imposes explicit geometric constraints, serving as a static importance measure that greatly benefits point cloud analysis. Searching for K-NN based on feature Euclidean distance in the feature space can be considered a simple form of dynamic importance. We analyzed the effect of this approach on point cloud classification from scratch on ScanObjectNN, evaluating geometric K-NN and feature K-NN at different K values. 

As shown in Table \ref{k-space}, we found that feature K-NN performed consistently lower than geometric K-NN in almost all cases. This result suggests that the naive idea of assigning dynamic importance to point patches based on Euclidean distance in the feature space does not lead to substantial improvements. Efficient computation of dynamic importance for each point patch remains an area for further exploration.

\textbf{Effects of Patch Order and LCFFN for Mamba-based Decoder.} 
\label{subsec:mamba}
The ordering of input patches significantly impacts our Mamba-based Decoder. To more effectively illustrate this effect on Mamba's SSM model, we analyze the issue from a different perspective. Specifically, we use our Mamba-based Decoder as an Encoder to directly extract features from the input point cloud and perform classification on ScanObjectNN. This substitution is straightforward, as our Mamba-based Decoder can also be viewed as an Encoder.

We trained our Mamba-based encoder from scratch for the classification task on the PB-RS-T50 variant of ScanObjectNN without using any data augmentation strategies, and we took the average of ten repeated experiments as the final result. We first experimented with a naive Mamba-based decoder using a traditional FFN to illustrate the impact of different sequence orders on the original Mamba. We selected four different patch ordering methods: sorting by the center point of the patch along the x-axis (X), y-axis (Y), and z-axis (Z), and Hilbert curve~\cite{hilbert} ordering (H), as shown by the orange curve in Figure \ref{mamba} (a). Furthermore, we also conducted experiments with combinations sequences, combining these four orderings "H+X+Y+Z (HXYZ)", "X+Y+Z+H (XYZH)", "Y+Z+H+X (YZHX)", and "Z+H+X+Y (ZHXY)", as shown by the green curve in Figure \ref{mamba} (a). Finally, based on the single-order sequence, we used our proposed LCFFN to demonstrate the performance of Mamba with added implicit geometric constraints, as shown by the yellow curve in Figure \ref{mamba} (a). The experimental results, as illustrated in Figure \ref{mamba}, lead us to the following conclusions:

\begin{figure*}[t]
    \begin{center}
    \includegraphics[width=\linewidth]{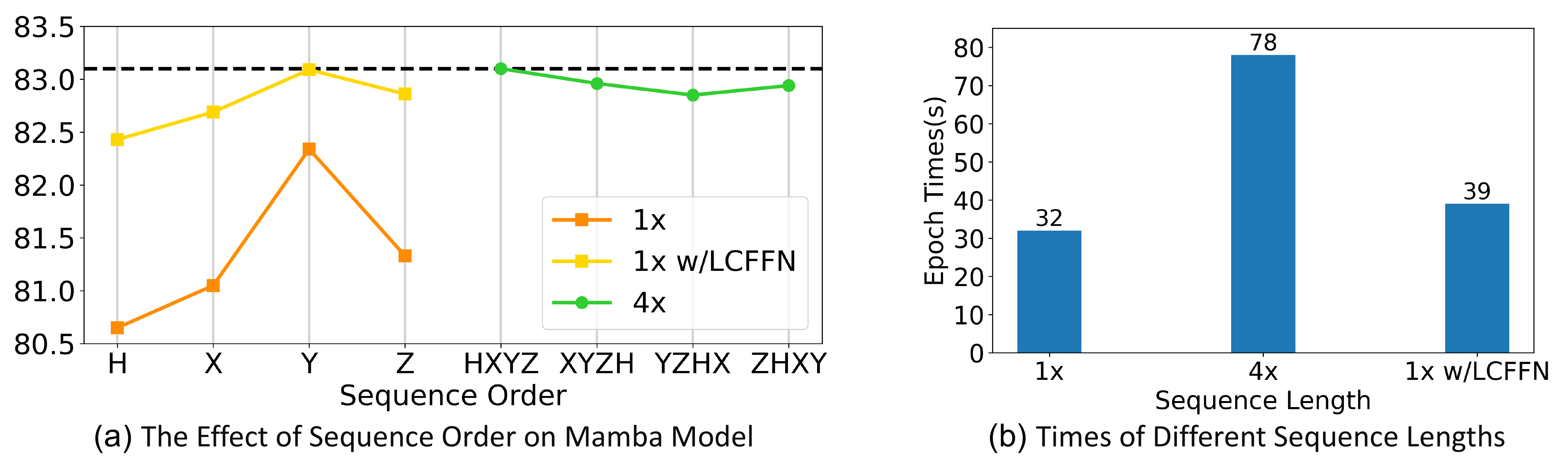}
    \caption{Training and testing curves for different encoders trained from scratch. We present the training and testing curves for both the classification task on ScanObjectNN and the detection task on ScanNetV2. All encoders were not pretrained.
    }\label{mamba}
    \end{center}
\end{figure*}

1) \textit{The performance of the Mamba model is greatly influenced by the different orders of input patches.} The orange line represents the results for individual sequences, highlighting that different sequences have a significant impact on the final model performance. For example, the Y-order achieves the highest classification accuracy at 82.34\%, while the Hilbert order performs the worst at 80.65\%, resulting in a difference of 1.69\%.

2) \textit{The more combinations of sequences, the better the representation of point cloud geometry, resulting in improved performance, but also increased computational complexity.} The green line represents the combinations sequences. While different combinations sequences do affect the final model performance, the impact is relatively minor. This indicates that the Mamba model can compensate for information across different sequences, allowing it to capture nearly complete geometric information for each patch. Consequently, this significantly enhances the model's performance. However, this approach leads to a significant increase in computational complexity due to the increase in the length of the input sequence, as shown in Figure \ref{mamba} (b). The processing time for the sequences of the four orders is approximately $3\times$ longer than that of a single order.

3) \textit{Introducing LCFFN allows for better perception of point cloud geometry through implicit local geometric constraints, thereby mitigating the dependence on sequence order.} The yellow line represents the experimental results of using LCFFN to replace FFN for single-order input. It can be observed that the overall classification accuracy is significantly improved, surpassing the combinations sequence in the y-order and showing only slight differences from the combinations sequence in other orders. Moreover, in terms of runtime efficiency, as shown in Figure \ref{mamba} (b), our single-order + LCFFN method exhibits a considerable improvement compared to the combinations sequence, indicating the superiority of our design.

\subsection{Additional Visualization}
\label{subsec:overfit}
\textbf{Effects of the Compact Encoder from the Perspective of Overfitting.}
While our compact encoder has fewer parameters compared to Transformer-based encoders, its performance surpasses that of Transformer-based encoders~\cite{attention}, as analyzed in Section \ref{subsec:class}. One significant reason for this lies in the reduced risk of overfitting in downstream tasks due to the redundancy reduction. Given the challenging nature of acquiring point cloud data, existing point cloud datasets for downstream tasks are often small, such as ScanObjectNN~\cite{scanobjectnn} and ModelNet40~\cite{modelnet}, each comprising just over 10,000 point clouds, and ScanNetV2~\cite{scannet} with only 1,000 scenes. These dataset sizes are much smaller than those commonly found in image and language tasks. Therefore, fine-tuning in these size-limited datasets can be more prone to overfitting when considerable redundancy exists in the computation.

We visualize the training and testing curves for different encoders on the classification task in ScanObjectNN and the detection task in ScanNetV2 in Figures \ref{overfit_wpre} and \ref{overfit_wopre}. 
Figure \ref{overfit_wpre} illustrates the classification and detection curves for our compact encoder and a Transformer-based encoder after pretraining. It can be observed that during training, the classification accuracy and AP25 metric of the Transformer-based encoder are significantly higher than those of our compact encoder. However, during testing, our compact encoder exhibits superior performance compared to the Transformer encoder. This starkly indicates that the Transformer-based encoder tends to overfit the training set, demonstrating poorer generalization. Conversely, our compact encoder displays stronger generalization capabilities, indicating the superiority of the design of our compact encoder.

\begin{figure*}[h!]
    \begin{center}
    \includegraphics[width=\linewidth]{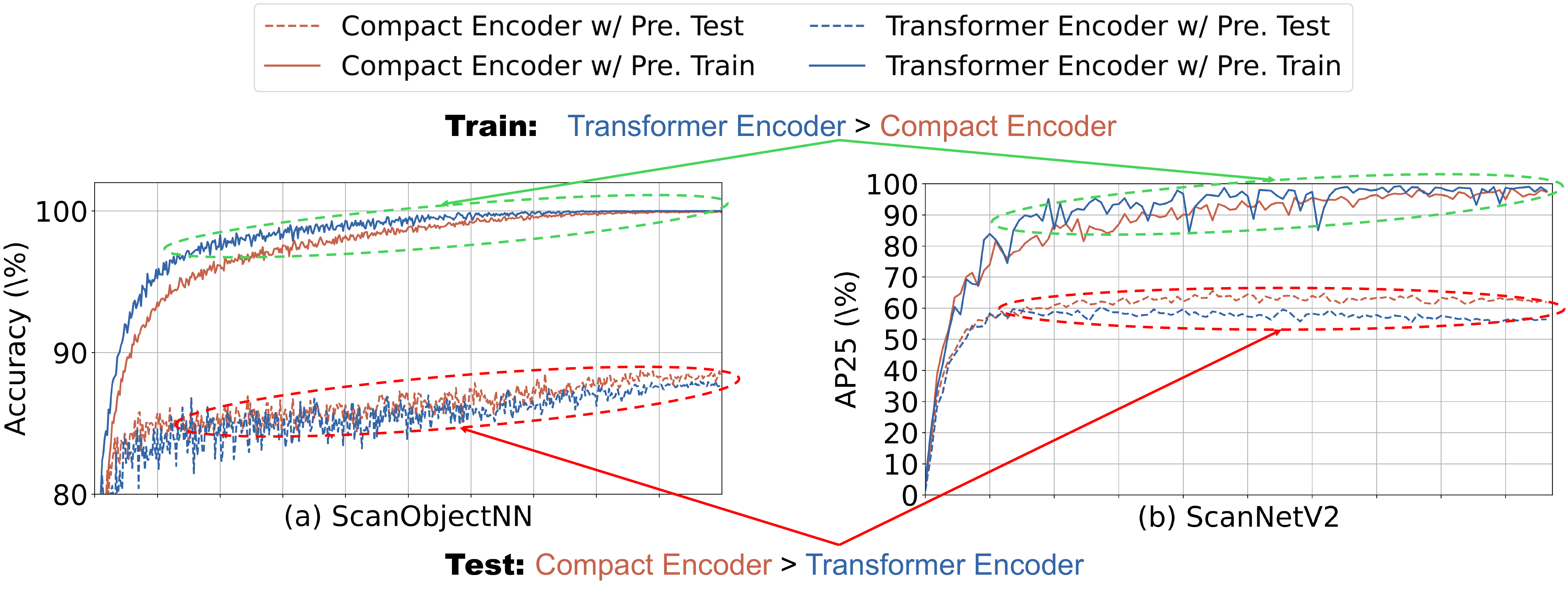}
    \caption{Training and testing curves of different pre-trained encoders. We present the training and testing curves for both the classification task on ScanObjectNN and the detection task on ScanNetV2. All encoders are pre-trained.
    }\label{overfit_wpre}
    \end{center}
\end{figure*}

Figure \ref{overfit_wopre} displays the classification and detection curves of our compact encoder and the Transformer-based encoder trained from scratch. In comparison to its counterpart in Figure \ref{overfit_wpre}, although it shows a slower convergence, the overfitting issue of the Transformer-based encoder still emerges in the late stages of training, reaffirming our conclusion. Meanwhile, the phenomenon of slow convergence in Figure \ref{overfit_wopre} is reasonable as it is an encoder trained from scratch without a better initialization.

\begin{figure*}[h!]
    \begin{center}
    \includegraphics[width=\linewidth]{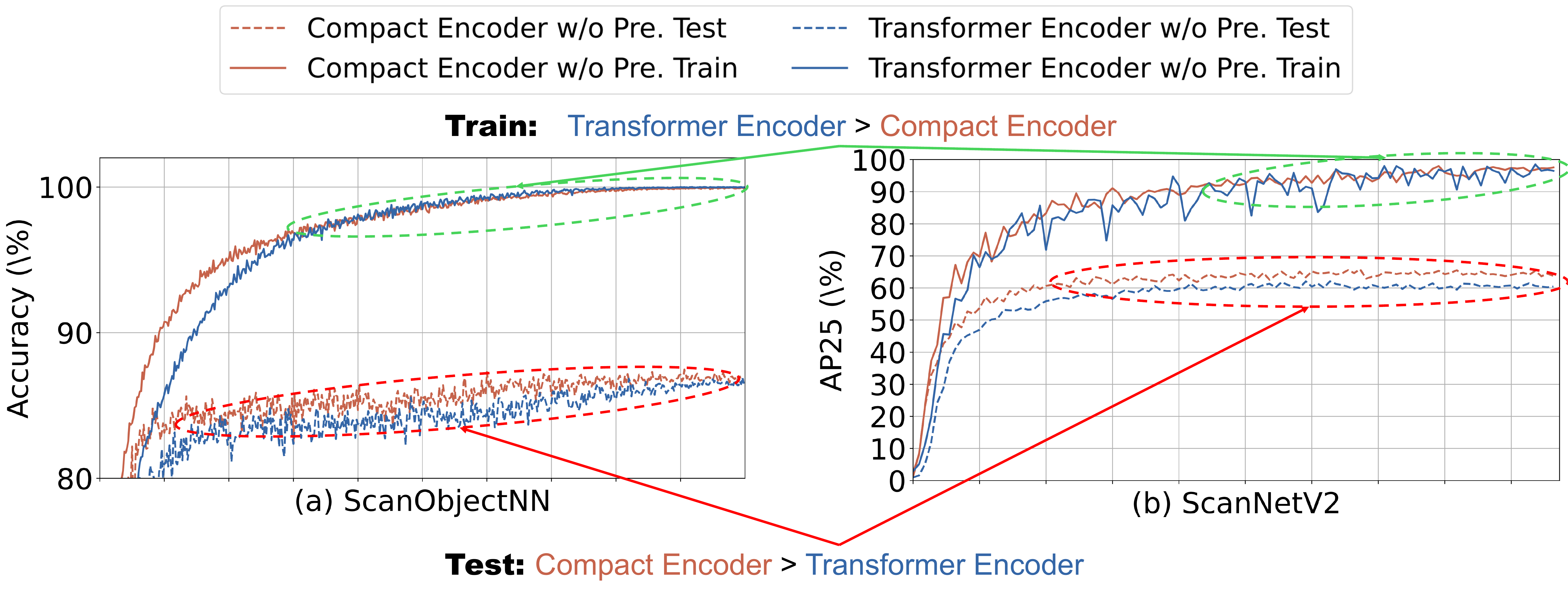}
    \caption{Training and testing curves for different encoders trained from scratch. We present the training and testing curves for both the classification task on ScanObjectNN and the detection task on ScanNetV2. All encoders were not pretrained.
    }\label{overfit_wopre}
    \end{center}
\end{figure*}

\paragraph{t-SNE Visualization.}
We further used t-SNE~\cite{tsne} to visualize the feature distributions extracted by our LCM model and the Transformer. In Figure 11, we visualized the two-dimensional (2D) feature distributions of the two models, pretrained using Point-MAE, when directly transferred to the test set of the ModelNet40~\cite{modelnet} dataset without downstream fine-tuning. In Figure 12, we visualized the 2D feature distributions of the two pre-trained models after fine-tuning on the most challenging variant of the ScanObjectNN~\cite{scanobjectnn} dataset, PB-RS-T50, using its test set.

In the 2D t-SNE visualizations, instances from the same category tend to be distributed in relatively clear and tight clusters. The compactness of the feature distributions of different instances from the same category can be viewed as the model's ability to represent features of the same category. A more compact distribution indicates a stronger modeling capability. As shown in Figure 11 and Figure 12, our LCM model achieves more compact feature distributions for instances of the same category compared to the Transformer model in most cases, indicating that our LCM model has a stronger ability to model the general representations of the same category.

\begin{figure*}[h]
\begin{minipage}[h]{0.48\linewidth}
\centering
\small
\vspace{-0.35cm}
\vspace{0.4cm}
\label{tsne1}
\begin{adjustbox}{width=\linewidth}
\includegraphics[width=0.5\textwidth]{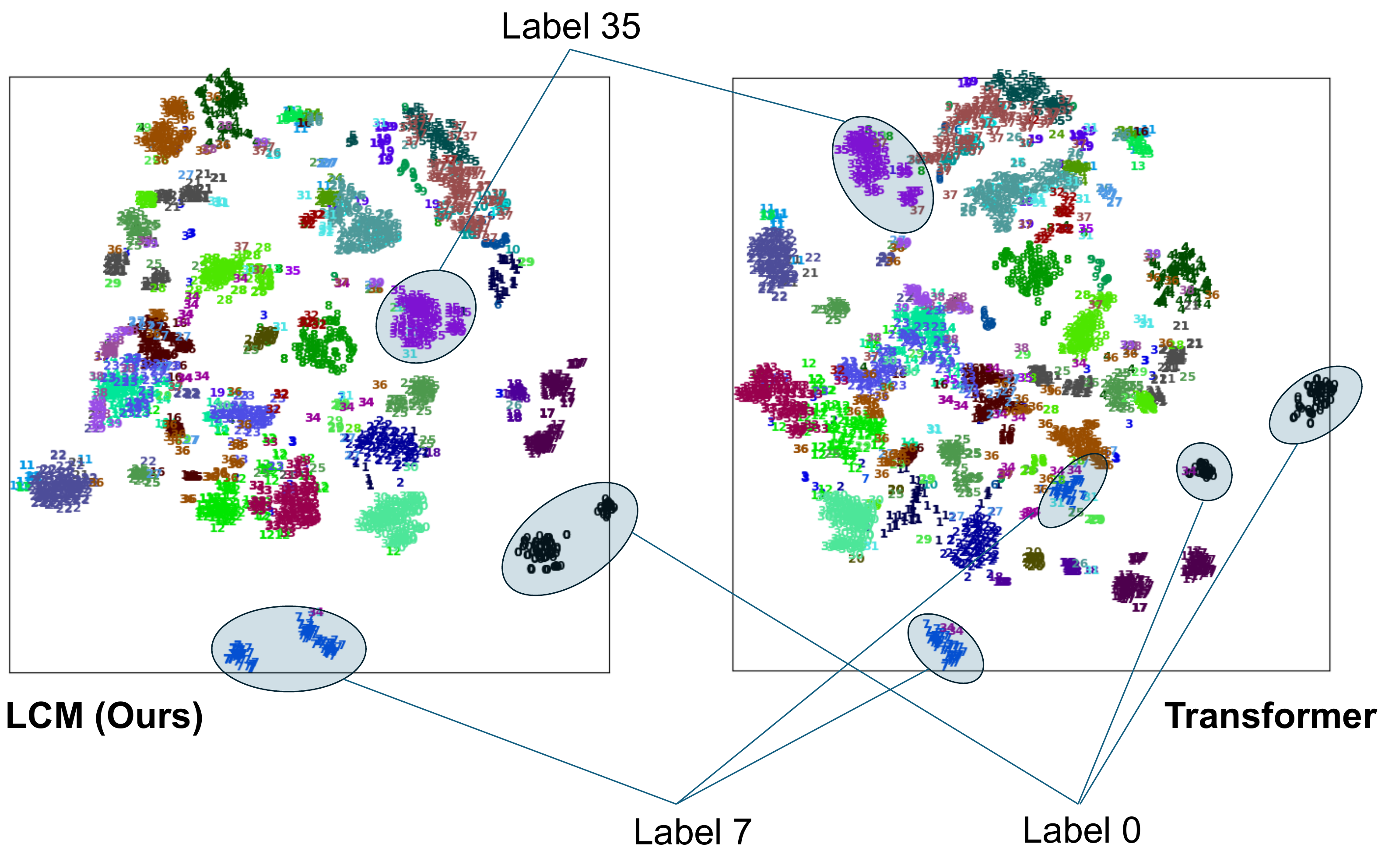}
\end{adjustbox}
\caption{The feature distribution visualization of the pre-trained models on the test set of ModelNet40.}
\end{minipage}\quad
\begin{minipage}[h]{0.48\linewidth}
\centering
\small
\vspace{-0.35cm}
\vspace{0.3cm}
\label{tsne2}
\begin{adjustbox}{width=\linewidth}
\includegraphics[width=0.5\textwidth]{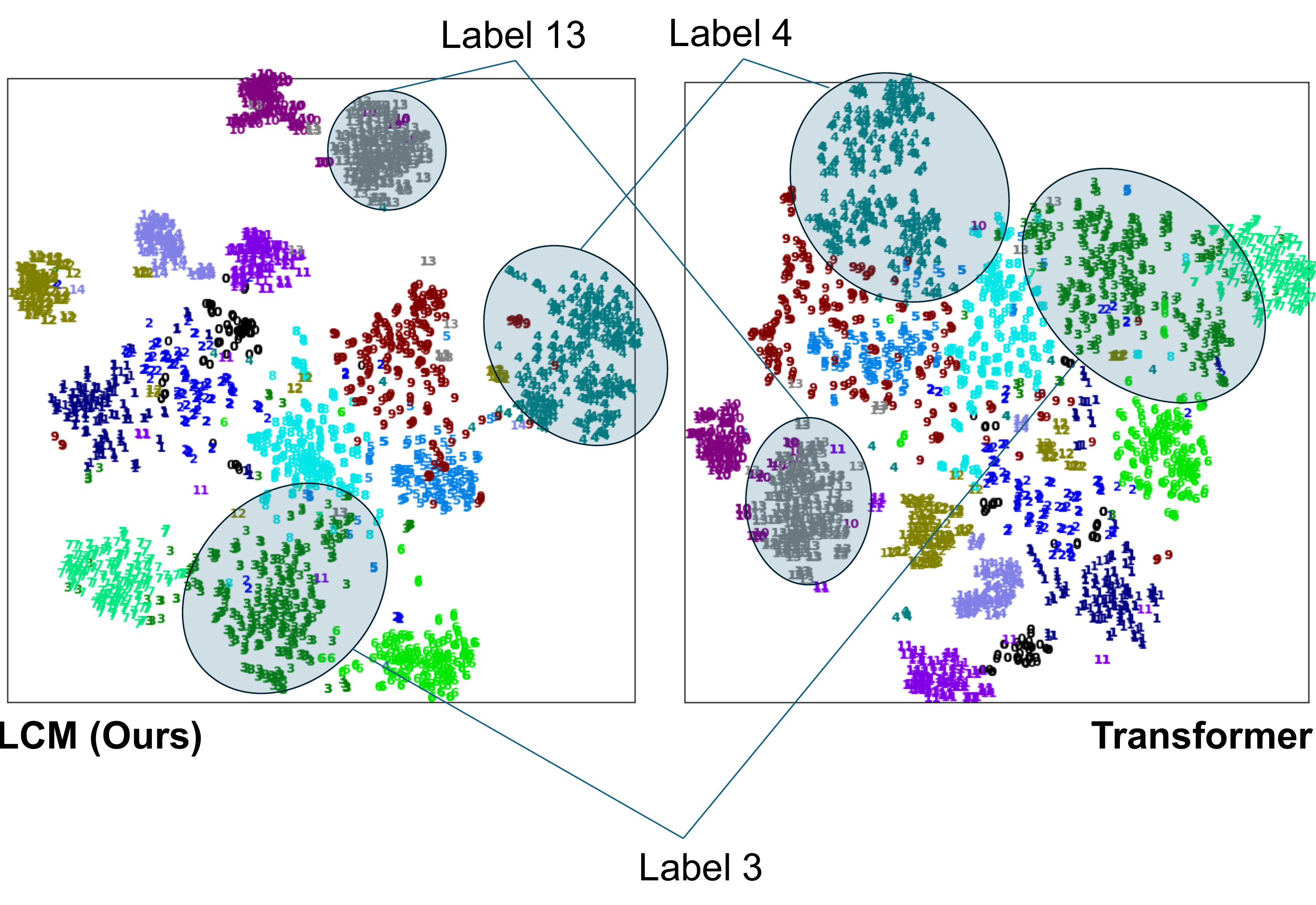}
\end{adjustbox}
\caption{The feature distribution visualization of the fine-tuned models on the test set of ScanObjectNN.}
\end{minipage}
\end{figure*}

\end{document}